\def\eqref#1{(\ref{#1})}
\def\1{\bm{1}}
\def\eps{{\epsilon}}
\def\rmA{{\mathbf{A}}}
\def\rmB{{\mathbf{B}}}
\def\rmE{{\mathbf{E}}}
\def\rmG{{\mathbf{G}}}
\def\rmH{{\mathbf{H}}}
\def\rmI{{\mathbf{I}}}
\def\rmX{{\mathbf{X}}}
\def\vtheta{{\theta}}
\def\vx{{\bm{x}}}
\def\vy{{\bm{y}}}
\def\mH{{\bm{H}}}
\def\mX{{\bm{X}}}
\DeclareMathAlphabet{\mathsfit}{\encodingdefault}{\sfdefault}{m}{sl}
\SetMathAlphabet{\mathsfit}{bold}{\encodingdefault}{\sfdefault}{bx}{n}
\newcommand{\E}{\mathbb{E}}
\newcommand{\R}{\mathbb{R}}
\DeclareMathOperator*{\argmin}{arg\,min}
\DeclareMathOperator*{\supp}{supp}
\title{The Iterative Optimal Brain Surgeon: Faster Sparse Recovery by Leveraging Second-Order Information}
\author{
  {\bf Diyuan Wu}$^1$
\qquad  {\bf Ionut-Vlad Modoranu}$^1$
\qquad  {\bf Mher Safaryan}$^1$ \\
  {\bf Denis Kuznedelev}$^{2,3}$
\qquad  {\bf Dan Alistarh}$^1$ \\ \vspace{12pt}
$^1$Institute of Science and Technology Austria (ISTA) \quad $^2$Yandex Research \quad $^3$Skoltech
}
\begin{document}

\maketitle

\begin{abstract}
  The rising footprint of machine learning has led to a focus on imposing \emph{model sparsity} as a means of reducing computational and memory costs. For deep neural networks (DNNs), the state-of-the-art accuracy-vs-sparsity is achieved by heuristics inspired by the classical Optimal Brain Surgeon (OBS) framework~\citep{lecun90brain, hassibi1992second, hassibi1993optimal}, which leverages loss curvature information to make better pruning decisions. Yet, these results still lack a solid theoretical understanding, and it is unclear whether they can be improved by leveraging connections to the wealth of work on sparse recovery algorithms. In this paper, we draw new connections between these two areas and present new sparse recovery algorithms inspired by the OBS framework that comes with theoretical guarantees under reasonable assumptions and have strong practical performance. Specifically, our work starts from the observation that we can leverage curvature information in OBS-like fashion upon the projection step of classic iterative sparse recovery algorithms such as IHT. We show for the first time that this leads both to improved convergence bounds under standard assumptions. Furthermore, we present extensions of this approach to the practical task of obtaining accurate sparse DNNs, and validate it experimentally at scale for Transformer-based models on vision and language tasks. 
\end{abstract}

\section{Introduction}

The increased focus on efficiency in machine learning has led to significant interest in \emph{sparsity} as a means of reducing both the computational and memory costs of training and executing accurate models, with literally hundreds of methods being proposed recently~\citep{hoefler2021sparsity}.The key trade-off in this context is between the degree of sparsity imposed and the accuracy of the  dense (uncompressed) model.

Currently, the best performing \emph{post-training}  methods for DNNs are inspired by the Optimal Brain Damage (OBD) framework of~\citet{lecun90brain},  generalized via the Optimal Brain Surgeon (OBS) approach of~\citet{hassibi1992second, hassibi1993optimal}.\footnote{The two frameworks primarily differ in the second-order approximation. Since OBS is a strict generalization of OBD, in the following we use OBS to denote the pruning framework.} 
In brief, OBD/OBS poses accurate pruning as a constrained optimization problem, and approximates the optimal solution by leveraging second-order information about the loss function to make better pruning decisions. Recently, there has been significant work on versions of OBS that can scale to deep neural networks (DNNs), leading to surprisingly good results at scale~\citep{frantar2022optimal, frantar2023sparsegpt}. Yet, no theoretical guarantees are known for OBS-type algorithms. 

By contrast, the area of \emph{sparse recovery} provides a rich set of theoretically-justified iterative algorithms for reconstructing or approximating a high-dimensional but sparse signal from a limited set of observations, such as Iterative Hard Thresholding (IHT)~\citep{blumensath2009iterative} or Matching Pursuit (MP)-type~\citep{tropp2007signal} algorithms. 
Yet, to our knowledge, these algorithms do not usually employ curvature information in their structure, and do not extend to DNNs. 

It is still an open question whether one can connect these lines of work, for instance by leveraging second-order information to improve the convergence of sparse recovery algorithms, or by extending these algorithms to the more complex task of sparsifying multi-layer DNNs. This is  the question we approach in this paper.

{\bf Contributions.} In this paper, we propose a family of algorithms that generalize the post-training OBS framework to the iterative setting, popular in the context of sparse recovery. 
This new approach, which we call the Iterative Optimal Brain Surgeon (I-OBS), provides convergence guarantees that can improve upon classic algorithms such as IHT by leveraging second-order information on the sparse projection step. As a consequence, we provide the first theoretical guarantees for popular practical pruning heuristics such as OBC~\cite{frantar2022optimal} or WoodFisher~\citep{singh2020woodfisher}. 
Second, on the practical side, we show that the improved guarantees can lead to faster practical convergence. 

In more detail, our contributions are as follows:
\begin{itemize}
    \item We propose a new family of sparse recovery algorithms called Iterative Optimal Brain Surgeon (I-OBS), which generalize upon the classic IHT-based algorithms by leveraging approximate second-order information in the sparse projection step. 

    \item We prove that this family of algorithms provides faster analytical convergence rates than previously-known methods, under standard first- and second-order smoothness and strong convexity assumptions. We also show that prior practical pruning algorithms such as WoodFisher~\citep{singh2020woodfisher} and OBC~\citep{frantar2022optimal} are special cases of our framework, providing them with theoretical guarantees. 

    \item We provide practical variants of these algorithms, which relax the theoretical constraints but can be easily implemented and can scale to large problem instances arising in the compression of vision and language models. We validate our results both on synthetic datasets, showing that our approach leads to faster convergence relative to IHT, and therefore validating our theoretical findings. 
    Finally, we present empirical evidence that our approach can scale to large models and lead to improved results, by implementing it to accurately sparsify models with more than 1 billion parameters. 
\end{itemize}

\section{Related Work}

{\bf Model compression.} In recent years, the literature has witnessed a significant surge in research focusing on model compression. Among various techniques, network pruning or sparsification stands out as a powerful approach, effectively reducing the model's parameter count by eliminating the least influential weights and fine-tuning the remaining ones \citep{singh2020woodfisher,Lin2020DPF,hoefler2021sparsity,frantar2023sparsegpt}. Following a similar concept to pruning, quantization techniques suggest reducing the precision of each parameter by allocating fewer bits, rather than completely removing weights from the network \citep{Yao2022ZeroQuant,Dettmers2022LLMint8,Frantar2023OPTQ,Kim2023SqueezeLLM}. Additionally, methods that leverage low-rank factorization are gaining popularity due to their inherent tensor structure \citep{Hu2022LoRA,Lialin2024ReLoRA,Xu2023TensorGPT,Nikdan2024RoSA}. Notably, these compression methods are versatile, applicable during both the optimization and post-training phases. Another noteworthy technique is knowledge distillation \citep{Hinton2014KD,Mishra2018Apprentice,Polino2018KDQ}, where the compressed model is a compact ``student'' model trained by a larger ``teacher'' model. The teacher model can either be already trained to high accuracy or trained simultaneously with the student model \citep{Guo2020OnlineKD,Harutyunyan2023KD}.

{\bf Sparse recovery. }  The model compression problem we study in this paper is closely related to the field of sparse recovery, which aims to optimize certain objective function based on sparsity constraints, see e.g. \citep{zhao2018sparse} for a detailed introduction. One of the most common sparse optimization algorithms is the iterative hard thresholding algorithm (IHT) first proposed by \cite{blumensath2009iterative} for compressed sensing problem, which applies a top$k$ operator to each gradient update,  which keeps the $k$ elements with the largest magnitude and setting the rest to zero. While IHT is simple and effective, there is also a long line of work \citep{blumensath2012accelerated,  bahmani2013greedy, chen2017fast, meng2020newton, zhou2021global} that studies the sparse variants of Newton's methods for these sparse optimization problems, aiming to get acceleration benefits.  In particular, \cite{blumensath2012accelerated} proposed a general definition of accelerated IHT for sparse linear regression problems, and showed an accelerated convergence rate in this type of method. \cite{bahmani2013greedy} proposed a general framework called GraSP, and they proposed to use a restricted Newton step as an approximation of the sparse constrained optimization problem. \cite{chen2017fast} proposed a Fast Newton Hard Thresholding Pursuit, which uses an iterative algorithm to estimate the inverse Hessian matrix for a stochastic objective function.  They also obtain a composite rate of convergence. \cite{meng2020newton} consider applying hard thresholding to Newton's method for sparse linear regression problems, where they call Newton-Step-Based Iterative Hard Thresholding (NSIHT), and prove a global convergence for this specific problem. \cite{zhou2021global} proposed a general framework called Newton Hard Thresholding Pursuit (NHTP) for general sparse optimization problems showing implicit global and local quadratic convergence rates. In contrast to our approach, at each step NHTP selects the spare support based on the first-order gradient information, switches to restricted gradient update whenever the restricted Newton update fails to provide sufficiently good descent direction, and additionally incorporates a line search procedure to achieve global convergence.

{\bf Further comparison.}
Our work follows the salience-based weight pruning approach, which evaluates the impact of removing weights on the model's loss or output. Among these, methods based on second-order information are most relevant, particularly those following the \cite{hassibi1993optimal} framework. \cite{dong2017learning} use a second-order approximation of the loss function, assuming a zero gradient, and introduce a layerwise strategy to approximate the Hessian. \cite{singh2020woodfisher} scale this idea with the empirical Fisher approximation of the Hessian, improving performance and considering non-zero gradients. However, WoodTaylor methods by \cite{singh2020woodfisher} focus on pruning one weight at a time, whereas our work extends this to multiple weights.

Other methods addressing multiple weight pruning include Combinatorial Brain Surgeon (CBS) \citep{yu2022combinatorial}, which formulates mask selection as an integer programming problem and proposes two heuristics to approximate its solution. \cite{frantar2022optimal} tackle  layerwise pruning with a quadratic problem and introduces a greedy heuristic for efficient layerwise problem-solving. Similarly, \cite{benbaki2023fast} formulate layerwise pruning as a 
$\ell_0\ell_2$-regularized quadratic problem and propose an IHT-like iterative algorithm with a line search strategy. \cite{Aghasi2017Net-Trim} minimize the norm of weight matrices while ensuring similar activations in the pruned layer. While prior iterative algorithms like CHITA \citep{benbaki2023fast} focus on specific problems, our methods apply to general loss functions beyond quadratic problems and offer convergence guarantees. Practically, within our iterative framework, using a cost-effective projection step (e.g., TopK or SparseGPT \citep{frantar2023sparsegpt}) yields competitive results compared to more complex one-shot solvers.

\section{Derivation and convergence of I-OBS}\label{section:theory}

We start the formal presentation of our framework, outlining the setup, derivation of the method, and local convergence theory. 

\subsection{Notation} 
Denote $[d] = \{1,2,\dots, d\}$. For a given vector $\theta \in \R^d$, let $\supp(\theta) = \{ i\in[d] \colon \theta[i] \neq 0\}$ be the support of $\theta$, and $\|\theta\|_0 = |\supp(\theta)|$ be the cardinality of its support, where $| \cdot |$ denotes the cardinality of a set. Let $\{e_1,e_2,\dots,e_d\}$ be the standard basis vectors in $\R^d$. For a given index set $Q\subseteq[d]$, we define $e_Q = \sum_{q\in Q} e_q$, and $(\theta)_Q = \sum_{q\in Q} \theta[q]e_q$ the restriction of $\theta$ to $Q$. We denote $\theta \odot \theta'$ to be the Hadamard product of vectors $\theta$ and $\theta'$. Finally, we denote $T_k$  the ``Top-$k$'' operator choosing the top $k$ coordinates of the input by their magnitude and zeroing out the rest.

Next, we introduce notation for manipulating matrices by removing specific rows and columns. Given an index set $Q \subseteq [d]$, we define the square matrix $\rmE_{Q} \in \R^{d \times d}$ with diagonal entries $(\rmE_Q)_{i,i} = 1$ for $i \in Q$, and all other entries set to zero. Consequently, for a matrix $\rmH \in \R^{d \times d}$, the result of the left multiplication $\rmE_{Q} \rmH \in \R^{d \times d}$ yields a matrix where only the rows with indices in $Q$ are retained, and all other rows are set to zero. Similarly, the right multiplication $\rmH \rmE_{Q} \in \R^{d \times d}$ keeps the columns with indices in $Q$ untouched, and sets all other columns to zero.

Analogously, for a given index set $Q= \{q_1,...,q_{|Q|}\} \subseteq [d]$, define the rectangular matrix $\rmI_{Q} \in \R^{|Q| \times d}$ as
$\rmI_Q = [
        e_{q_1}^\top \; e_{q_2}^\top \; \dots \; e_{q_{|Q|}}^\top
    ]^\top$.
Hence, for a matrix $\rmH \in \R^{d \times d}$, the resulting matrix  $\rmI_{Q} \rmH \in \R^{|Q| \times d}$ is composed of the rows with indices in $Q$. Similarly,  $\rmH \rmI^{\top}_{Q} \in \R^{d \times |Q|}$ is composed of columns with indices in $Q$.
Finally, given a real symmetric matrix $\rmH \in \R^{d \times d}$, we denote the eigenvalues of $\rmH$ as $\lambda_1(\rmH) \leq \ldots \leq \lambda_d(\rmH)$, dropping $\rmH$ in the notation when clear from context.

\subsection{Problem Setup and Assumptions}
We consider a general optimization problem with sparsity constraints:
\begin{align}
    \label{genral prob}
    \min_{\theta\in\R^d} f(\theta) \quad\text{subject to}\quad \|\theta\|_0 \le k,
\end{align}
where $f: \R^d \to \R$ is a given smooth objective function of parameters $\theta\in\R^d$ and $k$ is the sparsity threshold. Finding or approximating the solution of~\eqref{genral prob} is known to be NP-hard even for quadratics \citep{Natarajan1995sparse,Foster2015variable}.

We make use of the following assumptions regarding the convexity and smoothness of the objective function, which are similar to those in e.g. \citep{peste2021ac}: 

\begin{assumption}[The existence of  a sparse solution] \label{asm:theta_star}The function $f(\theta)$ admits a minimizer $\theta^*$ with sparsity $k^*\leq d$.
\end{assumption}

\begin{assumption}[$\mu$-Strong convexity]\label{asm:str-cvx} 

The function $f\colon\R^d\to\R$ is differentiable and $\mu$-strongly convex for some constant $\mu>0$, i.e., for any $\theta,\theta'\in\R^d$ it holds
\begin{equation}\label{eq:str-cvx}
f(\theta) \ge f(\theta') + \langle\nabla f(\theta'), \theta-\theta'\rangle + \frac{\mu}{2}\|\theta-\theta'\|^2_2.
\end{equation}
\end{assumption}

\begin{assumption}[$(k, d-k, L)$-Restricted first-order smoothness]\label{asm:1-smooth} 
The function $f\colon\R^d\to\R$ is twice differentiable and $k$-restricted $L$-smooth for some constant $L>0$, i.e., for any $\theta \in\R^d$ such that $\|\theta\|_0  \leq k $ it holds that: \begin{equation}\label{eq:1-smooth}
\squeeze
    v^{\top} \nabla^2 f(\vtheta) v \leq L\|v\|_2^2, \quad \text{for all } v \text{ with sparsity } \|v\|_0 \le d-k. 
\end{equation}
\end{assumption}

As our approach relies on curvature information, it is common to assume Hessian smoothness, similar to the gradient smoothness Assumption \ref{asm:1-smooth} in first-order optimization.

\begin{assumption}[$(2k+k^*, M)$-Restricted second-order smoothness]\label{asm:2-smooth} The function $f\colon\R^d\to\R$ is twice differentiable and $(2k+k^*)$-restricted $M$-smooth for some constant $M>0$, i.e., for any $\theta,\theta'\in\R^d$ such that $\|\theta\|_0 + \| \theta' \|_0 \leq 2k+k^* $ it holds
\begin{equation}\label{eq:2-smooth}
\|\nabla^2 f(\theta) - \nabla^2 f(\theta')\|_2 \le M \|\theta-\theta'\|_2,
\end{equation}
where the norm for a matrix $A$ is defined as $\|A\|_2 = \max_{\|x\|_2=1} \|Ax\|_2 = \sqrt{\lambda_{\max}(A^\top A)}$.
\end{assumption}

When the function is twice differentiable, note that the strong convexity~\eqref{asm:str-cvx} is equivalent to $\|\nabla^2 f(\theta)\|_2 \ge \mu$ for all $\theta\in\R^d$. Also, the $(k, d-k, L)$-Restricted first-order smoothness in Assumption \ref{asm:1-smooth} is weaker than $L$-smoothness given the function is twice differentiable, since  $L$-smoothness require $ v^{\top} \nabla^2 f(\vtheta) v \leq L\|v\|_2^2, \quad \text{for all } v$. Similarly, the $(2k+k^*, M)$-Restricted second-order smoothness  assumption is weaker than the usual $\mu$-second-order smoothness which requires $\|\nabla^2 f(\theta) - \nabla^2 f(\theta')\|_2 \le M \|\theta-\theta'\|_2, \forall \theta, \theta' \in \R^d$

\subsection{IHT as a Proximal Point Method}
Proximal Point Methods (PPMs) are the backbone of many optimization procedures for solving problem \eqref{genral prob}. In each step, given the current parameters $\theta_t$, PPM defines the next iterate via
\begin{equation}\label{PPM}
    \theta_{t+1} = \argmin_{\theta\in\R^d} \;  f(\theta) + \tfrac{1}{2\eta}\|\theta-\theta_t\|^2,
\end{equation}
for some learning rate $\eta>0$. The intuition behind adding a quadratic penalty term is to ensure that the next update $\theta_{t+1}$ is not far away from the current $\theta_t$. Clearly, we can consider PPM with sparsity constraint as in \eqref{genral prob}. However, solving \eqref{PPM} with or without a sparsity constraint is no easier than the initial problem of minimizing $f(\theta)$.

To make this practical, we adopt a model-based viewpoint, where in each iteration we choose a (possibly stochastic) model $\phi_t(\theta)$ that approximates the objective function $f(\theta)$ in expectation, namely $\E[\phi_t(\theta)] \approx f(\theta)$. For example, we might choose our model $\phi_t(\theta) = f(\theta_t) + \langle \nabla f(\theta_t), \theta-\theta_t \rangle$ to be the linear part of the objective; together with the sparsity constraints, we arrive at
\begin{equation}\label{PPM-IHT}
    \theta_{t+1} = \argmin_{\theta: \|\theta\|_0\le k} \; \phi_t(\theta) + \tfrac{1}{2\eta}\|\theta-\theta_t\|^2.
\end{equation}

With this linear viewpoint, the problem \eqref{PPM-IHT} can be solved in closed form and we get the well-known {\em Iterative Hard Thresholding (IHT)} method \citep{blumensath2009iterative}.

\begin{restatable}{lemma}{LemmaIHT}
\label{lemma:kiht from ppm}
    The closed-form solution to \eqref{PPM-IHT} with linear model $\phi_t(\theta) = f(\theta_t) + \langle \nabla f(\theta_t), \theta-\theta_t \rangle$ is
    \begin{equation*}
    \squeeze
        \theta_{t+1} = T_k(\theta_t - \eta\nabla f(\theta_t)).
    \end{equation*}
\end{restatable}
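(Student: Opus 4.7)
The plan is a direct computation: substitute the linear model into the objective and recognize the result as a projection problem onto the sparse set, whose closed-form solution is the Top-$k$ operator.

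First, I would plug $\phi_t(\theta) = f(\theta_t) + \langle\nabla f(\theta_t), \theta-\theta_t\rangle$ into \eqref{PPM-IHT} and drop the constant $f(\theta_t)$, since it does not affect the argmin. The remaining objective is
\begin{equation*}
\langle\nabla f(\theta_t), \theta-\theta_t\rangle + \tfrac{1}{2\eta}\|\theta-\theta_t\|_2^2.
\end{equation*}
Next I would complete the square in $\theta$. Writing $u_t \eqdef \theta_t - \eta\nabla f(\theta_t)$, a short calculation gives
\begin{equation*}
\langle\nabla f(\theta_t), \theta-\theta_t\rangle + \tfrac{1}{2\eta}\|\theta-\theta_t\|_2^2 \;=\; \tfrac{1}{2\eta}\|\theta - u_t\|_2^2 \;-\; \tfrac{\eta}{2}\|\nabla f(\theta_t)\|_2^2,
\end{equation*}
where the second term is independent of $\theta$. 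So the problem reduces to $\min_{\|\theta\|_0\le k}\|\theta-u_t\|_2^2$, i.e., the Euclidean projection of $u_t$ onto the set of $k$-sparse vectors.

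Finally, I would argue that this projection is exactly $T_k(u_t)$. For any candidate support $S\subseteq[d]$ with $|S|\le k$, the inner minimization over $\theta$ supported on $S$ is solved by $\theta = (u_t)_S$, yielding objective value $\sum_{i\notin S} u_t[i]^2 = \|u_t\|_2^2 - \sum_{i\in S} u_t[i]^2$. Minimizing over $S$ is then equivalent to maximizing $\sum_{i\in S} u_t[i]^2$ subject to $|S|\le k$, whose solution is to pick the indices of the $k$ largest-magnitude entries of $u_t$ (ties broken arbitrarily). That support together with $\theta = (u_t)_S$ is precisely $T_k(u_t)$, giving the claimed closed form.

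There is no real obstacle here; the only subtle point is the last step, where one should note that because the objective is monotone in the included magnitudes, the greedy top-$k$ choice is optimal, and any tie-breaking yields a valid minimizer (so $T_k$ may be multi-valued but the lemma only claims it realizes the argmin). The entire proof is a couple of lines once the square is completed.
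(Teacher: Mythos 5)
Your proof is correct, and it takes a more elementary route than the paper's. You complete the square up front, reducing \eqref{PPM-IHT} to the Euclidean projection of $u_t=\theta_t-\eta\nabla f(\theta_t)$ onto the set of $k$-sparse vectors, and then characterize that projection as $T_k(u_t)$ by the support-selection argument; your handling of the final step (optimal restriction on a fixed support, then greedy choice of the $k$ largest magnitudes, ties broken arbitrarily) is exactly right and is the only place where care is needed. The paper instead runs a two-stage argument: it first fixes a mask $Q$ and solves the equality-constrained quadratic subproblem via the Lagrangian/KKT conditions, obtaining $\theta_{t+1}^Q=\rmE_Q(\theta_t-\eta\nabla f(\theta_t))$, and only then completes the square to optimize over the mask. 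The two arguments rest on the same observation, but the paper's heavier machinery is not gratuitous: the identical two-stage template (fixed-mask KKT solve, then mask optimization) is reused verbatim in the proof of Lemma \ref{lemma:WT-full update}, where the quadratic penalty is measured in the Hessian norm and a one-line completion of the square no longer yields a coordinatewise-separable mask-selection problem. Your approach buys brevity and transparency for this particular lemma; the paper's buys a derivation that generalizes directly to I-OBS.
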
 
The derivation of $k$-IHT is standard, and we provide the proof of Lemma \ref{lemma:kiht from ppm} in Appendix \ref{app:kiht from ppm} for completeness.
Note that, choosing the stochastic linear model $\phi_t(\theta) = f(\theta_t) + \langle g(\theta_t), \theta-\theta_t \rangle$ with stochastic gradients $g(\theta_t)$, the update \eqref{PPM-IHT} reduces to stochastic IHT \citep{peste2021ac}. Besides, dropping the sparsity constraints recovers the standard stochastic gradient descent (SGD).

\subsection{I-OBS: Leveraging Second-order Information}
\label{sec:i-obs}
Our intuition is that we can go beyond the standard Euclidean norm in \eqref{PPM-IHT} used in the regularizer, penalizing the distance from the current iterate, by incorporating local curvature information via the Hessian matrix $\rmH_t = \nabla^2 f(\theta_t)$ instead of the implicit scaled identity matrix $\tfrac{1}{\eta}\rmI$, which treats all coordinates equally and ignores correlations. Specifically, we can write: 
\begin{equation}\label{WT}
    \theta_{t+1} = \argmin_{\theta: \|\theta\|_0\le k} \; \phi_t(\theta) + \tfrac{1}{2}\|\theta-\theta_t\|^2_{\rm\rmH_t}.
\end{equation}

Thus, in each iteration, we minimize the second-order approximation of $f(\theta)$ at $\theta_t$ over the sparse non-convex domain. In turn, this leads us to the proposed Iterative OBS (I-OBS) method. Dropping the constant term $f(\theta_t)$ of $\phi_t(\theta)$, I-OBS can be equivalently defined by
\begin{equation}\label{WT-iter}
\squeeze
	\theta_{t+1} = \argmin\limits_{\theta: \| \theta \|_0 \leq k  }\; \langle \nabla f(\vtheta_{t}), \vtheta - \vtheta_{t} \rangle + \tfrac{1}{2} \|\theta - \theta_{t}\|^2_{\rm\rmH_t}.
\end{equation}

We now introduce our idealized algorithm, which we call the Iterative Optimal Brain Surgeon (I-OBS), corresponding to Algorithm \ref{algorithm:i-obs} with Option 2. The main intuition behind the algorithm is that the computation of $\theta_{t+1}$ in Eqn.~\eqref{WT-iter} can be split into two phases: finding the optimal mask and then solving the problem with a given mask. We present the pseudocode below:

\begin{restatable}{algorithm}{algoIOBS}
\caption{Iterative Optimal Brain Surgeon (I-OBS)}
\label{algorithm:i-obs}
\begin{algorithmic}[1] 
\STATE{{\bf Input:} initial value $\theta_1\in\R^d$, sparsity threshold $k\in[d]$}
\FOR{each step $t \in \{1, 2, \dots, T\}$}
    \STATE Compute the gradient $g_t = \nabla_\theta f(\theta_t)$ and the Hessian  $\rmH_t = \nabla_\theta^2 f(\theta_t),$
    \STATE Compute dense Newton's update $\theta_t^+ = \theta_t - \rmH_t^{-1}g_t$
    \STATE $\rhd$ Select the support/mask $Q_{t+1}$ of size $k$ or $d-k$ indices $S_{t+1}$ to prune for the next iterate
    \STATE {\it Option 1 (practical):} $Q_{t+1} = \supp T_k(\theta_t^+)$
    \STATE {\it Option 2 (theoretical):} with $\rmH_t^S \eqdef \rmI_S^\top \left(\rmI_S \rmH_t^{-1}\rmI_S^\top\right)^{-1} \rmI_S$, set $Q_{t+1} = [d]\setminus S_{t+1}$ where
    $$\squeeze S_{t+1} = \argmin_{S:|S|=d-k} (\vtheta_t^+)^\top \rmH_t^S (\vtheta_t^+),$$
    \STATE $\rhd$ Optimize the parameters over the selected support/mask $Q_{t+1}$
    \STATE {\it Option 1 (practical):} $\theta_{t+1} = (\theta^+_t)_{Q_{t+1}} = T_k(\theta^+_t)$
    \STATE {\it Option 2 (theoretical):} $\theta_{t+1} = \left(\rmI - \rmH_t^{-1}\rmH_t^{S_{t+1}}  \right) \theta_t^+$
\ENDFOR
\end{algorithmic}
\end{restatable}

At each step of the algorithm, we first compute Newton's update $\theta_{t}^{+} = \theta_{t} - \rmH_{t}^{-1}g_{t}$, which might be dense in general. Next, we proceed to select the support $Q_{t+1}$, or equivalently, the set of indices $S_{t+1}$ for pruning. Unfortunately, the selection of the theoretically optimal support set $Q_{t+1}$ as outlined in \textit{Option 2} is intractable due to sparsity constraints. To address this limitation, we introduce the practical \textit{Option 1}, which prioritizes parameters with the largest magnitudes.

Once the mask $Q_{t+1}$ is determined, we optimize the remaining parameters to minimize loss. For \textit{Option 1}, no further optimization is undertaken, and the subsequent iterate $\theta_{t+1}$ comprises just the top $k$ elements with the largest absolute value of Newton's update $\theta_{t}^{+}$. On the other hand, the theoretical \textit{Option 2} adjusts the unpruned part of $\theta_{t}^{+}$ to precisely solve the sub-problem \eqref{WT-iter}, as demonstrated in the lemma below.


\begin{restatable}{lemma}{LemmaIOBS}
\label{lemma:WT-full update}
    If $\rmH_t$ is positive definite, then each step of Algorithm~\ref{algorithm:i-obs} (Option 2) solves Eqn.\eqref{WT-iter}. 

\end{restatable}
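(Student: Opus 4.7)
The plan is to reduce \eqref{WT-iter} to a sparse weighted projection onto $\theta_t^+$ and then decompose the optimization into (i) an inner mask-constrained quadratic subproblem, and (ii) an outer combinatorial search over supports. Since $\rmH_t \succ 0$, I would first complete the square using $\theta_t^+ = \theta_t - \rmH_t^{-1}\nabla f(\theta_t)$ to rewrite the objective of \eqref{WT-iter} as
\[
\tfrac{1}{2}(\theta - \theta_t^+)^\top \rmH_t (\theta - \theta_t^+) + C_t,
\]
where $C_t$ is independent of $\theta$. Thus \eqref{WT-iter} amounts to finding the $k$-sparse vector closest to $\theta_t^+$ in the $\rmH_t$-weighted norm.

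\textbf{Inner subproblem at fixed mask.} Fix a candidate support $Q$ of size $k$ and let $S = [d]\setminus Q$ (of size $d-k$). Writing $\theta = \theta_t^+ - w$, the constraint $\supp \theta \subseteq Q$ becomes $\rmI_S w = \rmI_S \theta_t^+$ while $w_Q := \rmI_Q w$ is free, and the objective reduces to $\tfrac{1}{2} w^\top \rmH_t w$. With the shorthand $\rmH_{AB} := \rmI_A \rmH_t \rmI_B^\top$, eliminating $w_Q$ via the stationarity condition $\rmH_{QS} w_S + \rmH_{QQ} w_Q = 0$ yields a closed-form minimizer whose optimal value, by the standard Schur-complement identity
\[
\bigl(\rmI_S \rmH_t^{-1} \rmI_S^\top\bigr)^{-1} = \rmH_{SS} - \rmH_{SQ}\rmH_{QQ}^{-1}\rmH_{QS},
\]
equals $\tfrac{1}{2}(\theta_t^+)^\top \rmH_t^S \theta_t^+$. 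Minimizing over $S$ of size $d-k$ is precisely the rule selecting $S_{t+1}$ in Option~2, so the optimal mask is $Q_{t+1}$.

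\textbf{Matching the stated closed form.} It remains to check that on $S = S_{t+1}$ the per-support minimizer equals $(\rmI - \rmH_t^{-1}\rmH_t^{S_{t+1}})\theta_t^+$. Define $u := \rmH_t^{-1}\rmH_t^{S_{t+1}}\theta_t^+$. A direct calculation yields
\[
\rmI_{S_{t+1}} u = \bigl(\rmI_{S_{t+1}}\rmH_t^{-1}\rmI_{S_{t+1}}^\top\bigr)\bigl(\rmI_{S_{t+1}}\rmH_t^{-1}\rmI_{S_{t+1}}^\top\bigr)^{-1} \rmI_{S_{t+1}}\theta_t^+ = \rmI_{S_{t+1}}\theta_t^+,
\]
so $\theta_t^+ - u$ vanishes on $S_{t+1}$ and has the correct sparsity pattern. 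On the complement $Q_{t+1}$, the block-inverse formula gives $\rmI_{Q_{t+1}} \rmH_t^{-1}\rmI_{S_{t+1}}^\top = -\rmH_{QQ}^{-1}\rmH_{QS}\bigl(\rmI_{S_{t+1}}\rmH_t^{-1}\rmI_{S_{t+1}}^\top\bigr)$, whence $\rmI_{Q_{t+1}} u = -\rmH_{QQ}^{-1}\rmH_{QS}\, \rmI_{S_{t+1}}\theta_t^+$, which matches the Lagrangian formula for $w_Q$ from the inner subproblem. Assembling both blocks identifies $u$ with $\theta_t^+ - \theta_{t+1}$, proving the claim.

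\textbf{Main obstacle.} The core of the argument is a textbook Schur-complement decomposition of a block-quadratic; the actual work is careful bookkeeping in the paper's $\rmI_Q, \rmE_Q$ formalism, since $S$ and $Q$ are in general non-contiguous index sets, and both the selection criterion and the closed-form update have to be expressed through the $\rmH_t^S = \rmI_S^\top (\rmI_S \rmH_t^{-1}\rmI_S^\top)^{-1}\rmI_S$ shorthand rather than explicit block submatrices. Once the block-inverse and Schur-complement identities are laid out cleanly, the lemma follows by assembly.
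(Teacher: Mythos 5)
Your proof is correct, and it follows the same two-stage skeleton as the paper (inner subproblem at a fixed mask, outer combinatorial search over masks), but the algebraic machinery is genuinely different. The paper never completes the square: it sets up a Lagrangian for the equality-constrained inner problem, solves the KKT system explicitly for $\lambda^*$ and $\delta^*$ to obtain the closed form $\theta_{t+1}^S = (\rmI - \rmH_t^{-1}\rmH_t^S)\theta_t^+$ directly, and then substitutes this back into the objective and simplifies through a chain of $\argmin$-preserving manipulations to arrive at the selection rule $\argmin_S (\theta_t^+)^\top \rmH_t^S \theta_t^+$. You instead rewrite the objective as $\tfrac12\|\theta-\theta_t^+\|^2_{\rmH_t}$ up to a constant, eliminate the free block $w_Q$ by stationarity, and invoke the Schur-complement identity $(\rmI_S\rmH_t^{-1}\rmI_S^\top)^{-1} = \rmH_{SS}-\rmH_{SQ}\rmH_{QQ}^{-1}\rmH_{QS}$ to read off the optimal value $\tfrac12(\theta_t^+)^\top\rmH_t^S\theta_t^+$ in one step. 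Your route buys a more transparent outer minimization (the value of the inner problem is identified immediately, so the selection rule is obvious), at the cost of an extra verification that the block-elimination solution coincides with the stated closed form $(\rmI - \rmH_t^{-1}\rmH_t^{S_{t+1}})\theta_t^+$ --- which you carry out correctly via the block-inverse identities $\rmI_S u = \rmI_S\theta_t^+$ and $\rmI_Q u = -\rmH_{QQ}^{-1}\rmH_{QS}\rmI_S\theta_t^+$. The paper's KKT route produces that closed form as a byproduct of solving the inner problem, so it needs no such matching step, but its simplification of the outer objective is longer. One small point worth making explicit in your write-up: positive definiteness of $\rmH_t$ is what guarantees $\rmH_{QQ}$ and the Schur complement are invertible (the paper handles the analogous point by citing the interlacing lemma to show $\rmI_S\rmH_t^{-1}\rmI_S^\top \succ 0$).
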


The iterative procedure  Algorithm~\ref{algorithm:i-obs} with Option 2, is fairly complex to implement in practice for large models. However, it has a number of interesting theoretical properties and inspires practical extensions, which we describe in the next section. For the procedure Algorithm~\ref{algorithm:i-obs} with Option 1, we call it Top$k$-I-OBS, which we will discuss in more detail in section \ref{sec:topk-iobs}

Although it appears quite theoretical, the I-OBS algorithm is closely related to popular neural network pruning algorithms such as WoodFisher,  WoodTaylor~\citep{singh2020woodfisher} and OBC~\citep{frantar2022optimal}. In particular, both  WoodFisher/WoodTaylor and OBC are special cases of the I-OBS algorithm. 

\paragraph{Connection to WoodFisher/WoodTaylor \citep{singh2020woodfisher}}To recover the WoodFisher, we set $\nabla f(\theta_0) = 0$ and $k = d-1$, obtaining ~\cite[equation (10) and (11)]{singh2020woodfisher}. More specifically, in this case we have $\theta_0^+ = \theta_0$ and $S = \{i\}$ is a singleton. Hence, the matrix $\rmH_{0}^S$ can be computed as $\rmH_{0}^S = \frac{e_i e_i^\top}{(H_0^{-1})_{ii} }$. Thus, the criterion of choosing the optimal index to prune becomes $i = \argmin_{i \in [d]} \frac{\theta^2_0[i]}{ (H_0^{-1})_{ii} }$, which recovers ~\cite[equation (11)]{singh2020woodfisher}. Also, the update of the remaining coordinates are computed as $\theta_1 - \theta_0 = \rmH_0^{-1} \frac{e_i e_i^\top}{(H_0^{-1})_{ii} } \theta_0 = \frac{-\theta_0[i] \rmH_0^{-1} e_i}{(H_0^{-1})_{ii} }  $, which recovers ~\cite[equation (10)]{singh2020woodfisher}. This implies that  WoodFisher can be considered as running I-OBS for one step under the assumption that $\nabla f(\theta_0) = 0$, and pruning one coordinate. Without the zero gradient assumption, we obtain the WoodTaylor method~\citep[Eqn. (23)]{singh2020woodfisher}. The computation is almost the same as the one for recovering WoodFisher, so we omit the details here.

\paragraph{Connection to OBC \citep{frantar2022optimal}} To recover the OBC, consider the objective function $f( \theta ) = \| \rmX \theta - \rmX \theta_0 \|_2^2$, where  $\rmX \in \R^{n \times d}$ and $\theta, \theta_0 \in \R^d$. From a practical point of view, we consider $\rmX$ as the input of a linear layer, and $\theta$ as a row-vector of the weight matrix. Note that a special property of this problem is that the objective function is quadratic, which implies that the Algorithm \ref{algorithm:i-obs} converges in one step. Indeed, since the objective is quadratic, for any $\theta$ and $\theta_0$, we have $f(\theta) =  f(\theta_0) + \langle \nabla f(\theta_0), \theta- \theta_0 \rangle + \frac{1}{2} \|\theta- \theta_0\|_2^2 $. However, the difficulty is that we need to find the optimal mask, which might be infeasible in practice. The OBC method  \cite[Algorithm 1]{frantar2022optimal} applies a greedy strategy, wherein it sets $k = d-1$ in the first step. Applying the I-OBS algorithm \ref{algorithm:i-obs} and brute-forcing over all the masks, we find the optimal coordinate to prune. Since, in this case, we only prune one entry at a time, brute-forcing over all the masks takes $\mathcal{O}(d)$ steps, which is feasible. Then, one applies this procedure for $d-k$ times to prune the rest of the weights. Thus, the OBC method can be viewed as running I-OBS for $d-k$ rounds, and for each round one lets the sparsity be $d-1$, i.e., only pruning one entry at a time.

\label{subsection:local convergence of I-OBS}

\subsection{Local convergence of I-OBS.}
In this section, we present the local convergence of the I-OBS method described in Section \ref{sec:i-obs} under the regularity Assumptions \ref{asm:theta_star}--\ref{asm:2-smooth}. As described in Algorithm \ref{algorithm:i-obs}, it contains a gradient step preconditioned by the Hessian matrix. Thus, our I-OBS method can be viewed as a sparse variant of Newton's method. Below, we demonstrate that, perhaps surprisingly, the imposed sparsity constraint does not degrade the asymptotic local quadratic convergence rate of Newton's method.

\begin{restatable}{theorem}{TheoremIOBS}
\label{thm:WT-converge}
    Let Assumptions \ref{asm:theta_star}, \ref{asm:str-cvx}, \ref{asm:1-smooth} and \ref{asm:2-smooth} hold. Then, for any fixed $k$ with $k^* \leq k \leq d$, the I-OBS algorithm has the following local quadratic convergence rate:
\begin{equation}\label{WT-local-rate}
\squeeze
\| \vtheta_{t+1} - \vtheta^* \|_2 \leq \left(1+ \sqrt{\frac{L}{\mu}\frac{d-k^*}{d-k}} \right) \frac{M}{2 \mu} \| \vtheta_t - \vtheta^*\|_2^2. 
\end{equation}
\end{restatable}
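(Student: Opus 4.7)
The plan is to split the I-OBS update into a dense Newton step $\theta_t^+ = \theta_t - \rmH_t^{-1}\nabla f(\theta_t)$ followed by the sparse projection $\theta_{t+1} = (\rmI - \rmH_t^{-1}\rmH_t^{S_{t+1}})\theta_t^+$, and invoke the triangle inequality
\begin{equation*}
\|\theta_{t+1}-\theta^*\|_2 \le \|\theta_t^+-\theta^*\|_2 + \|\theta_{t+1}-\theta_t^+\|_2.
\end{equation*}
The first term is controlled by classical local Newton analysis, and the second is bounded \emph{linearly} in $\|\theta_t^+-\theta^*\|_2$ with a dimension-aware prefactor capturing the price of sparsification. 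Combining the two delivers a net rate quadratic in $\|\theta_t-\theta^*\|_2$ of the form \eqref{WT-local-rate}.

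\textbf{Newton step.} Strong convexity together with Assumption~\ref{asm:theta_star} forces $\nabla f(\theta^*)=0$. Writing $\nabla f(\theta_t) = \bar\rmH_t(\theta_t-\theta^*)$ with $\bar\rmH_t := \int_0^1 \nabla^2 f(\theta^* + s(\theta_t-\theta^*))\,ds$ and bounding $\|\rmH_t - \bar\rmH_t\|_2 \le \tfrac{M}{2}\|\theta_t-\theta^*\|_2$ via Assumption~\ref{asm:2-smooth} (the relevant Hessians lie on the $(k+k^*)$-sparse segment from $\theta_t$ to $\theta^*$, well within the $2k+k^*$ budget), combined with $\rmH_t\succeq\mu\rmI$, one recovers the textbook bound $\|\theta_t^+-\theta^*\|_2 \le \tfrac{M}{2\mu}\|\theta_t-\theta^*\|_2^2$.

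\textbf{Projection step via optimality of the mask.} By Lemma~\ref{lemma:WT-full update}, $\theta_{t+1}$ is the $\rmH_t$-orthogonal projection of $\theta_t^+$ onto vectors supported on $Q_{t+1}$, so
\begin{equation*}
\|\theta_{t+1}-\theta_t^+\|_{\rmH_t}^2 = (\theta_t^+)^\top \rmH_t^{S_{t+1}}\theta_t^+ = \min_{|S|=d-k}(\theta_t^+)^\top \rmH_t^S\theta_t^+.
\end{equation*}
Since $k^*\le k$, any $S'\subseteq R:=[d]\setminus\supp(\theta^*)$ with $|S'|=d-k$ is feasible, and for such $S'$ we have $\rmI_{S'}\theta^*=0$, hence $\rmH_t^{S'}\theta^*=0$; setting $u := \theta_t^+-\theta^*$ gives $(\theta_t^+)^\top\rmH_t^{S'}\theta_t^+ = (\rmI_{S'}u)^\top(\rmI_{S'}\rmH_t^{-1}\rmI_{S'}^\top)^{-1}(\rmI_{S'}u)$. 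A Schur-complement identity in the block partition of $\rmH_t$ indexed by $S'$ and its complement shows $(\rmI_{S'}\rmH_t^{-1}\rmI_{S'}^\top)^{-1} = D - B^\top A^{-1}B \preceq D = \rmI_{S'}\rmH_t\rmI_{S'}^\top$, and Assumption~\ref{asm:1-smooth} (valid since $|S'|=d-k$) yields $D\preceq L\rmI$. Hence $(\theta_t^+)^\top\rmH_t^{S'}\theta_t^+ \le L\|\rmI_{S'}u\|_2^2$. Drawing $S'$ uniformly over the $\binom{d-k^*}{d-k}$ feasible subsets, each coordinate of $R$ lies in $S'$ with probability $(d-k)/(d-k^*)$, so $\mathbb{E}\|\rmI_{S'}u\|_2^2 \le \tfrac{d-k}{d-k^*}\|u\|_2^2$, and the minimum is bounded by the average. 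Converting the $\rmH_t$-norm to the Euclidean norm via $\rmH_t\succeq\mu\rmI$ and plugging into the triangle inequality alongside the Newton bound produces the quadratic rate \eqref{WT-local-rate}, with the dimension ratio emerging from this averaging probability.

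\textbf{Main obstacle.} The genuinely delicate step is controlling $(\rmI_{S'}\rmH_t^{-1}\rmI_{S'}^\top)^{-1}$ under \emph{restricted} first-order smoothness only: the naive ``$\rmH_t^{-1}\succeq L^{-1}\rmI$'' is unavailable, since Assumption~\ref{asm:1-smooth} governs $\rmH_t$ only along $(d-k)$-sparse directions. Identifying $(\rmI_{S'}\rmH_t^{-1}\rmI_{S'}^\top)^{-1}$ with the Schur complement $D - B^\top A^{-1}B$ and discarding the positive correction $B^\top A^{-1}B$ is what reduces the task to bounding $\rmI_{S'}\rmH_t\rmI_{S'}^\top$, which is precisely the object Assumption~\ref{asm:1-smooth} controls. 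A subsidiary check is that every iterate $\theta_t$ is $k$-sparse (immediate from the algorithm) and that the mean-value Hessians in the Newton step lie on $(k+k^*)$-sparse segments, so that the restricted assumptions apply throughout.
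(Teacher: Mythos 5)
Your proof is correct, and it shares the paper's overall skeleton --- split $\theta_{t+1}-\theta^*$ into the dense Newton error $\theta_t^+-\theta^*$ (handled by the textbook argument with $\mu$ and $M$, exactly as in the paper) plus the sparsification error, and bound the latter linearly in $\|\theta_t^+-\theta^*\|_2$ --- but the way you control the sparsification error is genuinely different. The paper bounds $\|\rmH_t^{-1}\rmH_t^{S_{t+1}}\theta_t^+\|_2$ in the Euclidean norm via spectral manipulations and the fact that $\rmH_t^{-1}\rmH_t^{S}$ is a projection (Lemma~\ref{proj-matrix}), then compares against the test set $\tilde S = [d]\setminus\supp T_k(\theta_t^+)$ and invokes the top-$k$ approximation property (Lemma~\ref{lemma:topk,sparse}) to bring in $\theta^*$. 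You instead work in the $\rmH_t$-norm, use the Schur-complement identity $(\rmI_{S'}\rmH_t^{-1}\rmI_{S'}^\top)^{-1} \preceq \rmI_{S'}\rmH_t\rmI_{S'}^\top \preceq L\rmI$ in place of the projection lemma, choose test sets $S'$ inside $[d]\setminus\supp(\theta^*)$ so that $\rmH_t^{S'}\theta^*=0$ (which is what lets $\theta^*$ enter), and replace the top-$k$ lemma by a uniform-averaging argument over such $S'$. Both routes are sound, and both correctly account for where the restricted assumptions are used (the $k$-sparsity of the iterates for Assumption~\ref{asm:1-smooth}, and the $(k+k^*)$-sparse segment for Assumption~\ref{asm:2-smooth}). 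One notable byproduct: your averaging yields the ratio $\tfrac{d-k}{d-k^*}\le 1$, whereas the theorem states $\tfrac{d-k^*}{d-k}\ge 1$; since the former is smaller, your argument proves a strictly sharper constant. (In fact the paper's own Lemma~\ref{lemma:topk,sparse}, applied with $k_u=d$ and $k_v=k^*$, also delivers $\tfrac{d-k}{d-k^*}$; the displayed $\tfrac{d-k^*}{d-k}$ in the paper's proof and theorem is a valid but loose upper bound of it.) So your proof establishes the stated inequality a fortiori.
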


The quadratic convergence rate in the form $\|\theta_{t+1} - \theta^*\|_2 \leq c\|\theta_t - \theta^*\|_2^2$ implies a $\mathcal{O}(\log \log \frac{1}{\eps})$ iteration complexity for achieving an $\eps$-error with initialization $\|\theta_0 - \theta^*\|_2 \leq \frac{1}{2 c}$. This complexity matches the classical Newton's method up to constants. Due to space constraints, we defer the complete proof to Appendix \ref{apx:pf_WT_converge}. Here, we provide a brief overview of the proof idea. 

\begin{proof}[Proof sketch]
We split the error $\theta_{t+1} - \theta^*$ into two parts: the standard error $\theta_t - \rmH_t^{-1} \nabla f(\theta_t) - \theta^*$ corresponding to the Newton's method and an additional error $\rmH_t^{-1}\rmH_t^{S^{t+1}} \left( \theta_t -\rmH_t^{-1}\nabla f(\theta_t) \right)$ stemming from the sparsity. The key technical challenge of the proof is to bound the second error by the first one, which then can be bounded by the squared error $\|\theta_t-\theta^*\|_2^2$ using standard tools.

Notice that the matrix $\rmH_t^{-1}\rmH_t^{S^{t+1}}$ of the second error term is a projection matrix (see Lemma \ref{proj-matrix}). This observation allows bounding the error by $\|\theta_t -\rmH_t^{-1}\nabla f(\theta_t)\|_2$, which however, may not converge to zero as $\theta^*$ is missing. To get a tighter upper bound, we utilize the structure of $\rmH_t^S$ and the definition of $S^{t+1}$ to show
\begin{equation}\label{eq:proof-sketch-1}
\squeeze
\|\rmH_t^{-1}\rmH_t^{S^{t+1}} \theta_t^+ \|_2 \le \sqrt{\tfrac{L}{\mu}}\|\rmI_{\tilde{S}} \theta_t^+\|_2
\end{equation}
for any set $\tilde{S}\subseteq[d]$ with $|\tilde{S}| = |S^{t+1}| = d-k$, where $\theta_t^+ = \theta_t -\rmH_t^{-1}\nabla f(\theta_t)$. Then, we choose the set $\tilde{S}$ to minimize the norm in the right, namely $\tilde{S} = [d] \setminus \supp\theta_t^+$. With this choice of the set $\tilde{S}$ we invoke Lemma \ref{lemma:topk,sparse} and get
\begin{equation}\label{eq:proof-sketch-2}
\squeeze
\|\rmI_{\tilde{S}} \theta_t^+\|_2
= \|T_k(\theta_t^+) - \theta_t^+\|_2
\le \sqrt{\frac{d-k^*}{d-k}}\|\theta_t^+ - \theta^*\|_2.
\end{equation}
Combining \eqref{eq:proof-sketch-1} and \eqref{eq:proof-sketch-2} together with the bound $\|\theta_t^+ - \theta^*\|_2 \le \frac{M}{2\mu} \|\theta_t - \theta^*\|_2^2$ of the standard Newton's method concludes the proof of our main claim in Eqn.~\eqref{WT-local-rate}.
\end{proof}

\textbf{Discussion and Implications.}  Although it offers strong convergence guarantees, the vanilla version of I-OBS is hard to implement in practice. The difficulties are two-fold. First, in computing $\rmH_t^S $, we need to compute $ (\rmI_S \rmH_t^{-1} \rmI_S^\top)^{-1} $ which is the inverse of a $|S| \times |S|$ matrix, requiring $\mathcal{O}(|S|^3)$ complexity in general. Also, if $|S| = c d$ for some $c \in (0,1)$, i.e. we prune a $c$-fraction of weight, then the complexity of this step will be $\mathcal{O}(d^3)$. Second, to obtain the optimal set $S^{t+1}$, we need to solve a combinatorial problem $\argmin_{S\colon|S|=d-k} (\vtheta_t^+)^\top \rmH_t^S \vtheta_t^+$. This problem is NP-hard \citep{Chen2014complexity,Natarajan1995sparse}, and brute-forcing over all possible sets has the complexity of $\mathcal{O}( d^k)$. If $k = cd$, then the complexity will be $\Omega( e^d )$ which is too expensive in practical applications. Thus, we propose a  practical variants of I-OBS which corresponds to Algorithm~\ref{algorithm:i-obs} (Option 1), namely Top$k$-I-OBS. We also provide local convergence guarantee for Top$k$-I-OBS.  Besides, we also proposed another version called stochastic I-OBS in Appendix \ref{sec:topk-iobs}.


\label{sec:topk-iobs}

\subsection{A Practical Variant: Top\texorpdfstring{$k$} I-OBS}
As mentioned above, one natural implementation of the I-OBS method in practice is to replace the multiplication by the matrix $\rmI - \rmH_t^{-1} \rmH_t^{S^{t+1}}$ with directly applying the Top-$k$ operator to the preconditioned gradient step. This leads to  the following  Top$k$-I-OBS update corresponding to Algorithm \ref{algorithm:i-obs} with Option 1: \begin{equation}
\label{eqn:topk_WT}
    \vtheta_{t+1} = T_k( \vtheta_{t} - \rmH_t^{-1} \nabla f(\vtheta_t) ) 
\end{equation} 

For the Top$k$-I-OBS method, we can still show local convergence similar to Theorem \ref{thm:WT-converge}.
\begin{restatable}{lemma}{LemmaTopKIOBS}
    \label{lemma:wt_topk,converge}
    Let Assumptions \ref{asm:theta_star}, \ref{asm:str-cvx}, \ref{asm:1-smooth} and \ref{asm:2-smooth} hold. Then I-OBS method defined in \eqref{eqn:topk_WT} has the following convergence rate:
\begin{equation}\label{eqn:WT-topk,rate}
\squeeze
\| \theta_{t+1} - \theta_* \|_2 \leq \left( 1+ \sqrt{\frac{k^*}{k}} \right) \frac{M}{2 \mu} \| \theta_t - \theta^* \|_2^2
\end{equation}
\end{restatable}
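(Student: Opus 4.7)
The plan is to combine two ingredients: the standard local quadratic bound for the Newton iterate $\theta_t^+ := \theta_t - \rmH_t^{-1}\nabla f(\theta_t)$, and a Top-$k$ projection inequality that will serve as the key lemma. Since $\theta_{t+1} = T_k(\theta_t^+)$, chaining the two yields \eqref{eqn:WT-topk,rate} immediately. For the Newton step, using $\nabla f(\theta^*)=0$ one writes
\[
\theta_t^+ - \theta^* \;=\; \rmH_t^{-1}\int_0^1 \bigl(\rmH_t - \nabla^2 f(\theta^* + s(\theta_t-\theta^*))\bigr)(\theta_t-\theta^*)\,ds,
\]
uses $\|\rmH_t^{-1}\|_2 \le 1/\mu$ from Assumption~\ref{asm:str-cvx}, and bounds the Hessian difference by $M(1-s)\|\theta_t-\theta^*\|_2$ via Assumption~\ref{asm:2-smooth}, producing $\|\theta_t^+-\theta^*\|_2 \le \frac{M}{2\mu}\|\theta_t-\theta^*\|_2^2$. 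The sparsity budget $2k+k^*$ in Assumption~\ref{asm:2-smooth} suffices since $\|\theta_t\|_0\le k$ and every point on the segment between $\theta^*$ and $\theta_t$ has support of size at most $k+k^*$.

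The main obstacle is the Top-$k$ projection inequality
\[
\|T_k(y) - \theta^*\|_2 \;\le\; \Bigl(1 + \sqrt{k^*/k}\Bigr)\,\|y - \theta^*\|_2 \qquad \text{for any $k^*$-sparse $\theta^*$, $k\ge k^*$.}
\]
To prove it, let $S_k = \supp(T_k(y))$, $S^* = \supp(\theta^*)$, and denote $a = \|y - T_k(y)\|_2$, $b = \|y-\theta^*\|_2$, $c = \|T_k(y)-\theta^*\|_2$. Expanding $b^2 = \|(y-T_k(y)) + (T_k(y) - \theta^*)\|_2^2$ and observing that $y - T_k(y)$ vanishes on $S_k$ while $T_k(y)-\theta^*$ equals $-\theta^*_i$ on $S^*\setminus S_k$ and zero elsewhere on $[d]\setminus S_k$, the cross term collapses to $-\sum_{i\in S^*\setminus S_k} y_i\theta^*_i =: -\eta$, yielding the identity $c^2 = b^2 - a^2 + 2\eta$. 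Setting $\delta := \sum_{i\in S^*\setminus S_k} y_i^2$ and $\gamma := \sum_{i\in S^*\setminus S_k}(\theta^*_i)^2$, Cauchy--Schwarz gives $\eta \le \sqrt{\delta\gamma}\le c\sqrt{\delta}$ since $\gamma\le c^2$ by construction. Substituting and using $\delta\le a^2$ gives $(c - \sqrt{\delta})^2 \le b^2$, hence $c\le b+\sqrt{\delta}$.

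What remains is to sharply bound $\sqrt{\delta}\le\sqrt{k^*/k}\,b$; this is the technical heart of the argument and where the factor $\sqrt{k^*/k}$ enters. The key is the averaging consequence of the Top-$k$ selection: writing $m := |S^*\cap S_k|$, every $|y_i|$ with $i\in S^*\setminus S_k$ is no larger than every $|y_j|$ with $j\in S_k\setminus S^*$, so averaging over the two sets (of sizes $k^*-m$ and $k-m$) gives
\[
\frac{\delta}{k^*-m} \;\le\; \frac{1}{k-m}\sum_{j\in S_k\setminus S^*} y_j^2 \;\le\; \frac{b^2}{k-m},
\]
where the last step uses $\theta^*_j = 0$ on $S_k\setminus S^*$ to recognize that sum as a part of $\|y-\theta^*\|_2^2$. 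Since $(k^*-m)/(k-m) \le k^*/k$ whenever $k^*\le k$ (clearing denominators reduces to $m(k-k^*)\ge 0$), this yields $\delta \le (k^*/k)\,b^2$; combining with $c\le b+\sqrt{\delta}$ gives the Top-$k$ inequality, and chaining with the Newton bound gives the lemma. The remaining bookkeeping across the four index sets $S^*\cap S_k$, $S^*\setminus S_k$, $S_k\setminus S^*$, $[d]\setminus(S^*\cup S_k)$ is routine.
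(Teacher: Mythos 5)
Your proof is correct and arrives at the same bound, but the key combinatorial step is handled by a genuinely different argument than the paper's. The paper first restricts the error to the union of supports $Q^{t+1}\cup Q^*$, splits by the triangle inequality into the discarded mass $\|T_k(u)-u\|_2$ of the restricted vector $u=\rmE_{Q^{t+1}\cup Q^*}(\theta_t^+)$ (which has sparsity at most $k+k^*$) and the restricted Newton error, and then invokes the generic top-$k$ property of Lemma~\ref{lemma:topk,sparse} with $k_u\le k+k^*$ and $k_v=k^*$ to extract the factor $\sqrt{k^*/k}$. You instead prove the direct projection inequality $\|T_k(y)-\theta^*\|_2\le\bigl(1+\sqrt{k^*/k}\bigr)\|y-\theta^*\|_2$ from scratch, via the exact identity $c^2=b^2-a^2+2\eta$, Cauchy--Schwarz with $\gamma\le c^2$, and an averaging comparison between the coordinates of $y$ on $S^*\setminus S_k$ and on $S_k\setminus S^*$ (together with $(k^*-m)/(k-m)\le k^*/k$). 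Both routes are valid and yield the same constant; yours is more self-contained --- no restriction operators and no appeal to an external lemma --- and isolates a reusable statement about hard thresholding against an arbitrary $k^*$-sparse target, while the paper's is shorter given that Lemma~\ref{lemma:topk,sparse} is already needed for Theorem~\ref{thm:WT-converge}. A minor plus of your write-up is that you explicitly verify that the sparsity budget $2k+k^*$ of Assumption~\ref{asm:2-smooth} covers the segment between $\theta_t$ and $\theta^*$ in the Newton bound, a point the paper leaves implicit; the Newton-step estimate itself is identical in both proofs.
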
 The proof is deferred to Appendix \ref{apx:pf_topkWT}. We remark that similar algorithms are studied for sparse linear regression problems with a linear global rate, while our Top$k$-I-OBS method applies to a wider class of problems, and obtains a quadratic local rate. 

\section{Experiments}\label{section:experiments}

\subsection{Synthetic Experiments for Sparse Linear Regression}

To validate our theoretical analysis, we first  consider the sparse linear regression problem with two different priors. First, we consider a sparse linear regression problem with Gaussian prior. In particular, we first sample a signal $\vtheta_* \in \R^d$ of dimension $d = 128$ from a standard Gaussian prior, and then we set it to be sparse by uniformly random keeping $k_* = 16$ entries, and set the rest to zero. Then, we generate $n=256$ sensing vectors $\mX = [\vx_1, \vx_2, \dots, \vx_n  ]^\top \in \R^{n \times d} $ i.i.d. from a standard Gaussian distribution with variance $\frac{1}{n}$, and the label is generated via a linear measurement $y_i = \langle \vtheta_* , \vx_i  \rangle  $. We aim to recover the signal by minimizing the empirical mean square loss $\mathcal{L}(\theta) = \|\vy - \mX \vtheta \|_2^2$, and set the sparsity for each step to $ k = 64$. It is easy to see that in this case, the Hessian of the loss function is $\mH:= \nabla^2 \mathcal{L}(\vtheta) = \mX^\top \mX$, and this matrix is with high probability positive definite.

We compare $k$-IHT \citep{peste2021ac} and Top$k$-I-OBS \eqref{eqn:topk_WT} numerically. For $k$-IHT, we apply the  learning rate $\eta = \frac{1}{\lambda_{\max}( \mH )}$. We run each algorithm for $750$ steps, and run $20$ independent experiments then take the average over all the experiments. The synthetic experiments can be done on a personal laptop. The results are shown in Figure \ref{subfig:lc-slinreg}. We see that the top$k$-WoodTaylor method indeed has a better convergence rate compared to $k$-IHT due to the use of second-order information in Newton's step. In particular, the left plot of Figure \ref{subfig:lc-slinreg} characterizes the training loss, and the right plot characterizes the distance to the optimal solution.

In the second setting, we let the signal be uniformly sampled from the MNIST dataset. In particular, we randomly sample an image of MNIST training dataset and use this image as the signal $\vtheta_*$. In this case, the dimension  $d$ of the signal is $784$, and we denote the sparsity of the signal as $k_*$. Next we  generate $n=2d = 1568$ sensing vectors $\mX$ i.i.d from a standard Gaussian distribution with variance $\frac{1}{n}$, and the label is generated via a linear measurement $y_i = \langle \vtheta_* , \vx_i  \rangle  $. Similarly, we minimize the MSE loss, and the sparsity we set for each step is $ k = 2 k_*$. We also compare $k$-IHT and Top$k$-I-OBS numerically.  We run each algorithm for $4000$ steps and run $20$ independent experiments. For the $k$-IHT, similarly, we apply the maximum possible learning rate.

We show the recovered signals in Figure \ref{subfig:recovery-mnist}. (The learning curve is almost identical to the Gaussian case and therefore omitted.) When plotting the recovered image, we only plot the value between $[0, 255]$. We can see from the learning curve that the Top$k$-I-OBS has a better convergence rate, which results in a better recovery quality compared to $k$-IHT for a fixed number of steps.

\begin{figure}

  \centering

  \begin{subfigure}[t]{.49\linewidth}
    \centering\includegraphics[width=0.99\linewidth,height=100pt]{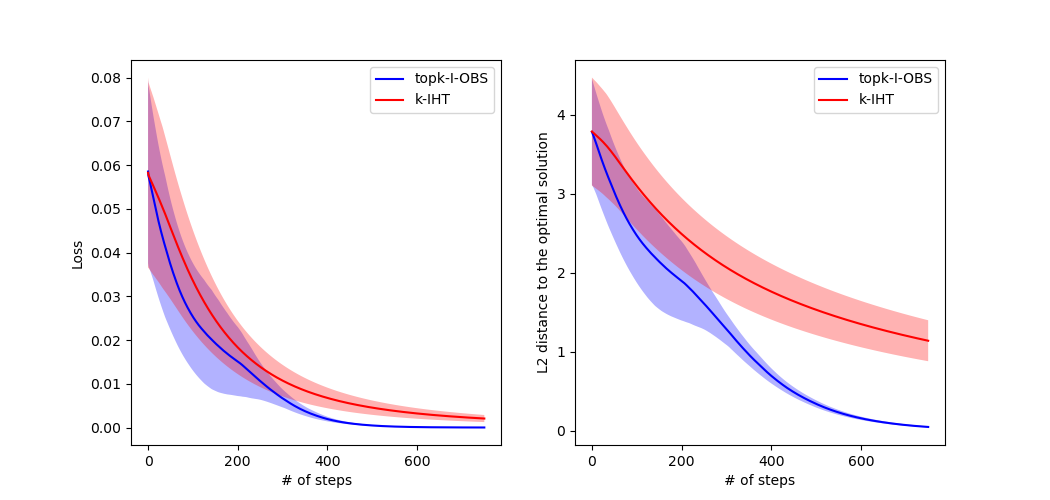}
    \caption{Learning curve  with standard Gaussian prior.}
    \label{subfig:lc-slinreg}
  \end{subfigure}\qquad
  \begin{subfigure}[t]{.45\linewidth}
    \centering\includegraphics[width=.9\linewidth]{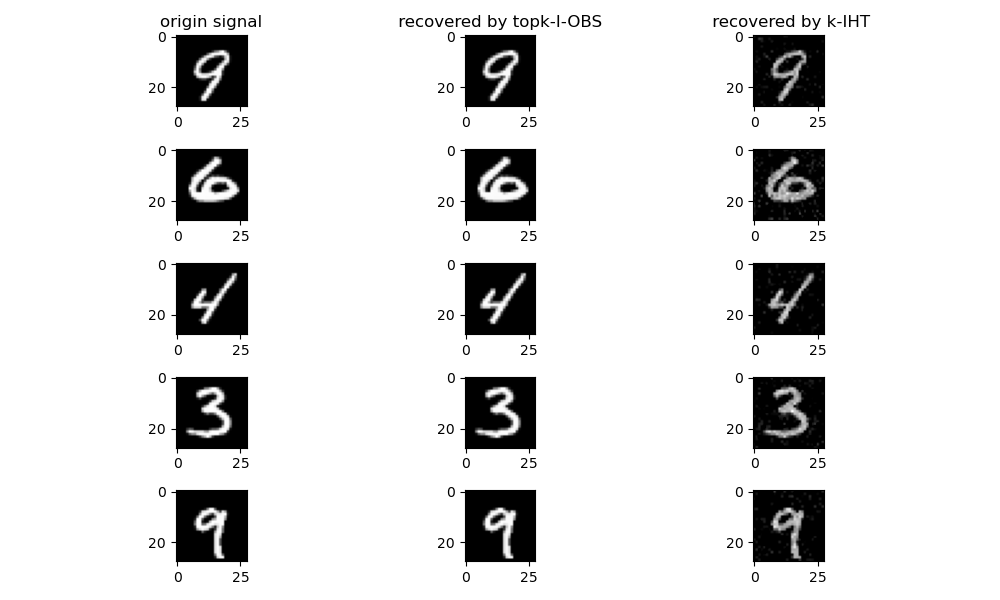}
    \caption{Recovery of the MNIST prior.}
    \label{subfig:recovery-mnist}
  \end{subfigure}

    \caption{Comparison of $k$-IHT and Top$k$-WoodTaylor for sparse linear regression with standard Gaussian and MNIST priors.}
    \label{fig:slinreg}
\end{figure}

\subsection{Applying I-OBS to Model Pruning}

In this section we describe our implementations for selecting the set of weights to prune using second order information applied to practical models.

\begin{restatable}{algorithm}{algoIOBS-pracitcal}
\caption{Iterative Optimal Brain Surgeon (I-OBS) for practical model pruning}
\begin{algorithmic}[1] \label{algorithm:practical-I-OBS}
\STATE{{\bf Input:}  Sparsity threshold $k_{\ell} \in[d]$ for each layer $\ell \in \{1,2, \dots, L\}$}
\FOR{each round $t \in \{1, 2, \dots, T\}$}
    \STATE Sample a data batch $X_t$
    
    \STATE $\rhd$ \emph{Pruning and optimization step: } \STATE $H^0 \gets X_t$
    \FOR{each layer $\ell \in \{1,2, \dots, L\}$}
        \STATE Solve the constrained optimization problem \begin{equation*}
        \squeeze
            \min_{\widehat{W}^{\ell}} \|W_{t-1}^{\ell} H^{\ell-1} - \widehat{W}^{\ell} H^{\ell -1}\|_2^2 \hspace{ 5mm} s.t. \hspace{2mm} \|\widehat{W}^{\ell}\|_0 = k_{\ell}
        \end{equation*} using a quadratic  sparsity solver such as OBC or SparseGPT.
        \STATE $W_{t-1}^\ell \gets \widehat{W}^{\ell}$
        \STATE $W_t^\ell \gets W_{t-1}^\ell - \eta g_t^\ell(X_t, W_{t-1}^\ell)$ for each $\ell \in \{1,2, \dots, L\}$
        
    \ENDFOR
\ENDFOR
    
\end{algorithmic}
\end{restatable}

 \paragraph{Detailed algorithm.} The algorithms we implement for pruning are  described in  Algorithm  \ref{algorithm:practical-I-OBS}. We consider pruning an $L$-layer feedforward neural network $f(X; W_1, \dots W_L) = \sigma_L \circ W^{L} \circ \dots \circ \sigma_\ell \circ W^\ell \circ \dots \sigma_1 \circ W^1 X$, but a similar pruning strategy would apply to neural networks with different structure.   More specifically, we consider a pretrained model, and we aim to find a sparse model using the I-OBS algorithm. We apply a layer-wise pruning strategy as in OBC \citep{frantar2022optimal}. In particular, for each round $t$, we sample a new data batch $X_t$, and solve the quadratic sparse optimization  problem \begin{equation*}
    \min_{\widehat{W}} \|W X - \widehat{W} X\|_2^2 \hspace{ 5mm} s.t. \hspace{2mm} \|\widehat{W}\|_0 = k
\end{equation*} where $H^\ell = \sigma_{\ell-1} \circ W^{\ell-1}  \circ \dots \circ \sigma_1 \circ W^1 X$ is the input of the $\ell$-th layer. We use the SparseGPT method \citep{frantar2022optimal} as an approximate solver for this problem. The connection between one-shot pruners such as OBC or SparseGPT and I-OBS was discussed in Section \ref{sec:i-obs}. 

\paragraph{Example 1: Pruning Vision Transformers (ViTs).} We now apply this approach to pruning DNNs from the ViT family~\citep{dosovitskiy2021image}. 
For this, we instantiate the above pruning framework for the case of the SparseGPT solver in the context of I-OBS, to prune the DeiT Tiny (5M parameters), DeiT small (22M parameters) and DeiT base (86M parameters) models in one shot to 50\% uniform sparsity, starting from an accurate pretrained model, using 1000 ImageNet calibration samples.  We present our results in Table~\ref{table:vit}, where the single-iteration result is given by the one-shot algorithm, which in this instance is SparseGPT. 

\begin{table}[!ht]
\begin{center}
\caption{\label{table:vit} Pruning results for ViTs (DeiT) using SparseGPT. We report Top-1 accuracy results on the ImageNet validation set.}
\begin{tabular}{c|ccccccc}
    \toprule
    \textbf{Model}      & \multicolumn{7}{c}{\textbf{\# iterations}} \\
                        & D &  1    & 10    & 25    & 50    & 75    & 100 \\
    \midrule
    \textbf{DeiT-Tiny}  & 72.08 & 62.98 & 63.61 & 64.01 & 64.11 & 64.13 & 64.05 \\
    \textbf{DeiT-Small} & 79.81 & 76.07 & 76.28 & 76.36 & 76.50 & 76.57 & 76.57 \\
    \textbf{DeiT-Base}  & 81.91 & 80.29 & 80.30 & 80.37 & 80.42 & 80.46 & 80.43 \\
    \bottomrule
\end{tabular}
\end{center}
\end{table}

The results show that I-OBS can bring consistent improvements across all three model sizes, with the largest improvements coming at the smallest model size (5M), where the performance drop relative to the dense model is most pronounced. 
We also observe that improvements quickly saturate at the larger model sizes and with respect to increasing the number of I-OBS iterations, suggesting that the sparse solution is already close to the local optimum in the case of highly-overparametrized models. 

\paragraph{Example 2: Pruning Large Language Models (LLMs).} 
Finally, we extend the previous approach to LLMs, again comparing against SparseGPT as a baseline. Specifically, we estimate the Hessian matrix at each layer by using 128 calibration samples from the WikiText2 dataset for OPT-125M and from Red Pajama dataset for Phi-1.5B, respectively. 
We report the accuracy in terms of perplexity (lower is better) on the WikiText2 and C4 datasets. In this scenario, we start from a dense model and prune it to $50\%$ uniform sparsity, using I-OBS for $3$ iterations. 
(We observed that using a larger number of iterations leads to ``overfitting,'' by which we mean good PPL on WikiText2 but worsening on C4.) 
The results are given in Table~\ref{table:llm}. 


\begin{table}[!ht]
\begin{center}
\caption{\label{table:llm} Pruning results for Phi-1.5M using SparseGPT. We report perplexity (the lower, the better).}
\begin{tabular}{cc|ccc|ccc}
    \toprule
    \textbf{Model}    & \# samples& \multicolumn{3}{c}{\textbf{WikiText2}} & \multicolumn{3}{c}{\textbf{C4}} \\
                      &     & Dense     & SparseGPT     & I-OBS(3)     & Dense     & SparseGPT    & I-OBS(3)     \\
    \midrule
    \textbf{OPT-125M} & 128 & 27.65 & 33.85 & 25.20 & 24.61 & 32.27 & 31.41 \\
    \midrule
    \textbf{Phi-1.5}  & 128 & 21.82 & 25.28 & 23.94 & 20.90 & 21.13 & 20.26 \\
    \bottomrule
\end{tabular}
\end{center}
\end{table}

In both cases, we observe that I-OBS leads to significant improvements in terms of perplexity, specifically of approximately one PPL point on C4, relative to SparseGPT. 
This effect confirms the fact that I-OBS is significantly better at minimizing the per-layer compression loss relative to a single iteration of SparseGPT.

We also applied our methods on Llama-2(7B) and Llama-3(8B) models, and the results are discussed in Appendix \ref{apx:second_order_iht_llm}

\section{Limitations and Future works}
\label{sec:lim and future}
In this section, we discuss the limitation of this work, and address a few directions for future work. 

From the theoretical perspective, our main limitation is the mild gap between the analytical assumptions and the practical setting we considered in the pruning experiments. We mention that our assumptions are standard in the literature (see e.g. \cite[Analytical Assumptions]{peste2021ac}) to guarantee the convergence of gradient-based methods, it does not hold in general for practical problem such as model pruning. Thus, one interesting direction for future work is to relax those assumptions. 
In addition, in Theorem \ref{thm:WT-converge} and  Lemma \ref{lemma:wt_topk,converge} we only provide local convergence rate, which mean that we require the initial distance from the global optimum to be small enough. While such local convergence results are typical for Newton type algorithms (see e.g. \cite[Theorem 1.2.5]{nesterov2018lectures}), there are a few modifications of the vanilla Newton's methods which gives a global convergence rate such as adding a cubic regularization term \cite[Section 4.1]{nesterov2018lectures} or adding a proper adaptive quadratic regularization term \citep{mishchenko2021regularized}. The challenge is how to incorporate the sparsity constrain to make each step sparse, and we leave this direction for future works. 
For the experiments, it would be interesting to apply the pruning method to larger models, which we plan to do in future work. 

\section*{Acknowledgements}
 The authors thank the IT department from the Institute of Science and Technology Austria for the hardware support, and Weights and Biases for the infrastructure to track all our experiments.
Mher Safaryan has received funding from the European Union's Horizon 2020 research and innovation program under the Maria Skłodowska-Curie grant agreement No 101034413.

\bibliographystyle{plainnat}
\bibliography{references}


\appendix

\clearpage
{
\tableofcontents
}

\section{Experimental Details}\label{appendix:experimental-details}
In this section we provide details about the hyper-parameters that we used for both experiments described in Section~\ref{section:experiments}.


\subsection{I-OBS for Model Pruning}
\paragraph{Pruning ViTs.} We use ViTs from the \texttt{timm} package that are pretrained on ImageNet using RGB images of size $224\times224$. We set both learning rate and hessian dampening to $0.01$, sparsity to $50\%$ and prune the query, key, values, projection and fully connected layers for $100$ iterations using batch size 128.

\paragraph{Pruning LLMs.} We use the pretrained OPT-125m model from Meta in \textit{bfloat16} format with sequence length $2048$ and prune it to $50\%$ uniform sparsity using $3$ iterations of I-OBS on batches of size $128$ (calibration samples) from the WikiText2 using both learning rate and hessian dampening set to $0.01$.

We use the pretrained Phi-1.5 model from Microsoft also in \textit{bfloat16} format with sequence length 2048 and prune it to $50\%$ uniform sparsity using $3$ iterations of I-OBS on batches of size $128$ (calibration samples) from the Red Pajama dataset using learning rate $1e-5$ and hessian dampening set to $0.01$.

\subsection{Extra experiments}
\label{apx:second_order_iht_llm}

{\bf I-OBS for LLMs} To study the scalability of I-OBS pruning to LLM (Large Language Models)
we consider iterative pruning followed by finetuning for Llama-2 (7B) and Llama-3 (8B) models. Specifically, we apply SparseGPT \citep{frantar2023sparsegpt} pruner with Hessians estimated using 8M tokens (2k sequences of length 4096 for Llama-2 and 1k sequences of length 8192 for Llama-3) and use the same calibration set for finetuning \footnote{Calibration data is sampled from Red Pajama dataset.}. After each pruning step we tune the model for 1 epoch {\em without} applying any sparsity mask, i.e. all parameters may become non-zero before some of them are projected back on next pruning iteration. We adopt Adam optimizer with learning rate of $10^{-5}$, batch size of $16$ for Llama-2 and $8$ for Llama-3. 
The dynamics of evaluation metrics for both model is shown on Figures
\ref{fig:llama2_i_obs} and \ref{fig:llama3_i_obs}. We also evaluate the pruned model on MMLU tasks, and the results are shown in the following Table \ref{tab:mmlu-llama2-7B} and \ref{tab:mmlu-llama3-8B}

\begin{figure}[htb]
\centering
\begin{subfigure}{0.49\textwidth}
    \includegraphics[width=\linewidth]{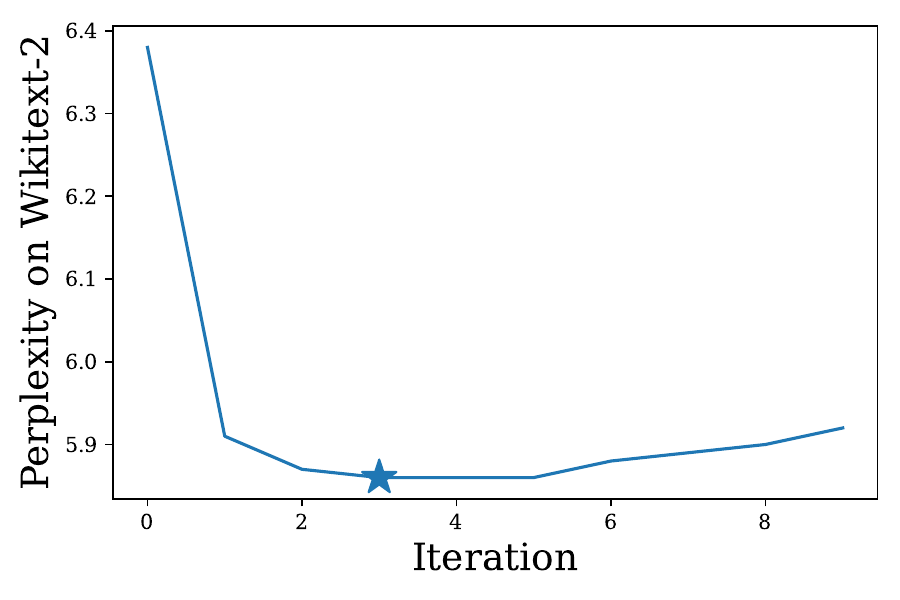}
\end{subfigure}
\hfill
\begin{subfigure}{0.49\textwidth}
    \includegraphics[width=\linewidth]{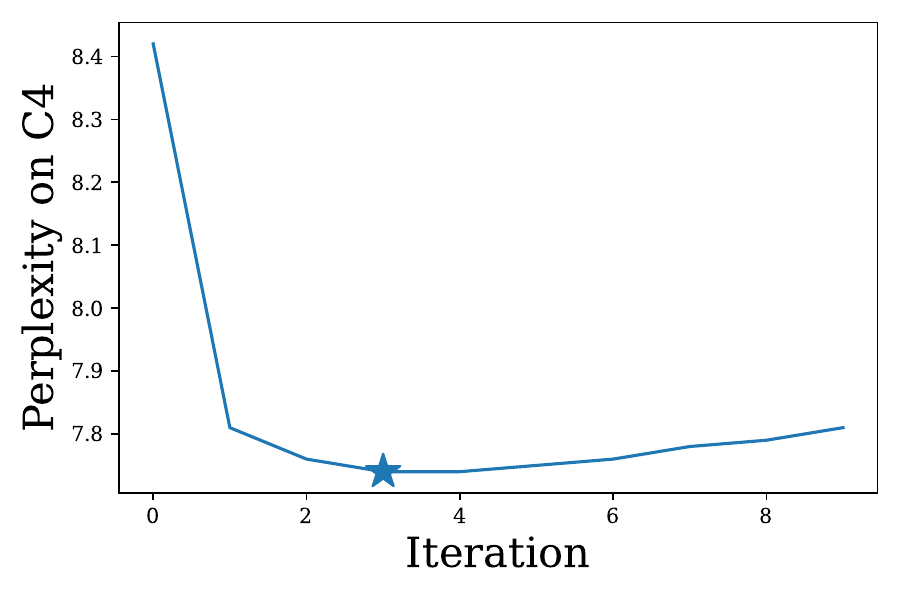}
\end{subfigure}
\caption{
I-OBS dynamics for Llama-2 7B (star corresponds to best validation score). 
(\textbf{Left}) Wikitext-2 Perplexity vs iteration.
(\textbf{Right}) C4 Perplexity vs iteration.
}
\label{fig:llama2_i_obs}
\end{figure}

\begin{figure}[htb]
\centering
\begin{subfigure}{0.49\textwidth}
    \includegraphics[width=\linewidth]{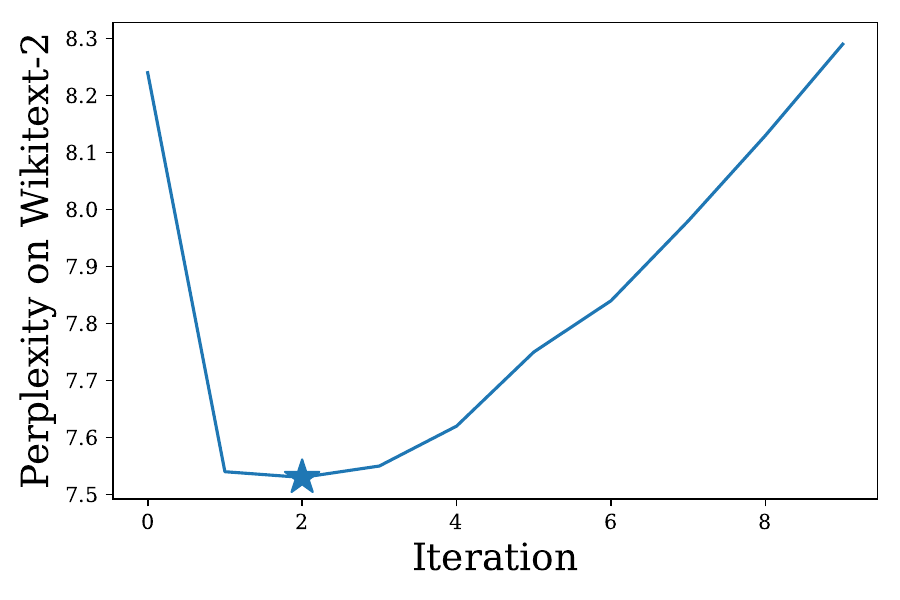}
\end{subfigure}
\hfill
\begin{subfigure}{0.49\textwidth}
    \includegraphics[width=\linewidth]{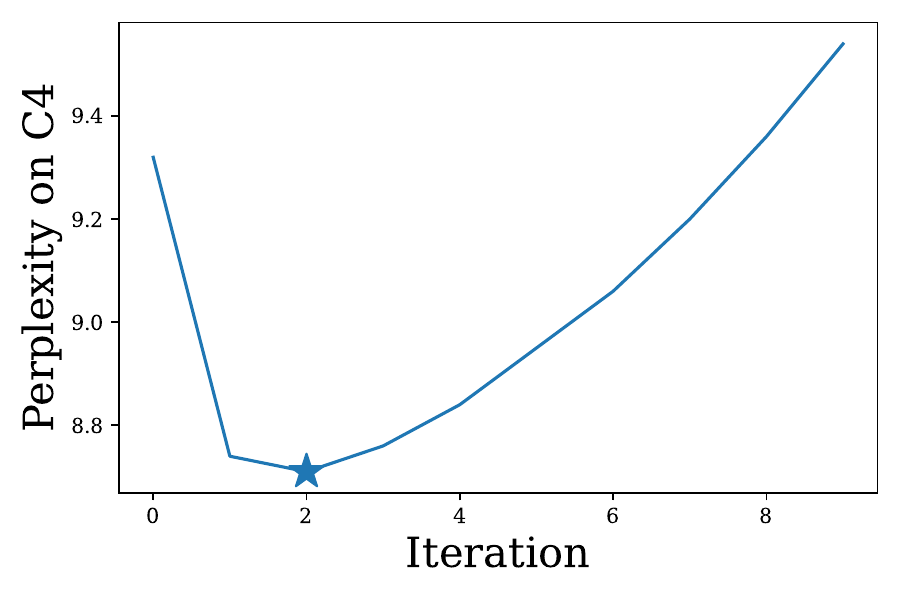}
\end{subfigure}
\caption{
I-OBS dynamics for Llama-3 8B (star corresponds to best validation score). 
(\textbf{Left}) Wikitext-2 Perplexity vs iteration.
(\textbf{Right}) C4 Perplexity vs iteration. 
}
\label{fig:llama3_i_obs}
\end{figure}

One can observe that the quality of sparse solution significantly improves after first finetuning iteration. Then there is small improvement during the next few iterations, followed by gradual performance deterioration
afterward. We speculate that this behavior is due to the overfitting on the calibration set.

\begin{table}[h!t]
\caption{Performance of I-OBS on Llama-2-7b}
\centering
\begin{tabular}{cc}
\toprule
\textbf{Iterations} & \textbf{MMLU(5-shot)}  \\
\midrule
0 (dense)  &     0.4584 \\
1          &     0.3878 \\   
2          &    \textbf{0.3950} \\
3          &     0.3932 \\
4          &     0.3943 \\
5          &     0.3946 \\
6          &     0.3929 \\
7          &     0.3919 \\
8          &     0.3893 \\
9          &     0.3903 \\
10         &     0.3863 \\
\bottomrule
\end{tabular}
\label{tab:mmlu-llama2-7B}
\end{table}

\begin{table}[h!t]
\caption{Performance of I-OBS on Llama-3-8B}
\centering
\begin{tabular}{cc}
\toprule
\textbf{Iterations} & \textbf{MMLU(5-shot)}  \\
\midrule
  0 (dense)  &      0.6525  \\   
  1          &      0.5476   \\ 
  2          &      0.5577   \\  
  3          &      0.5582   \\  
  4          &    \textbf{0.5605} \\  
  5          &      0.5553   \\  
  6          &      0.5521   \\  
  7          &      0.5444   \\  
  8          &      0.5449   \\  
  9          &      0.5403   \\  
  10         &      0.5375   \\  
\bottomrule
\end{tabular}
\label{tab:mmlu-llama3-8B}
\end{table}

{\bf Comparing with CBS} In Table \ref{tab:iobs-cbs} of the attached file in global rebuttal, we provide results for I-OBS applied to MobileNetV1 model used in STR, which is the same model studied in Table 4 of \citep{yu2022combinatorial}. We skip the depth-wise convolutions having shape $(C,1,K,K)$ when we apply our pruning algorithm, as is standard. Starting with 60\% sparsity, our I-OBS pruner outperforms CBS method on MobileNetV1 by a large margin: 2\% for 60\% sparsity, 5\% for 70\% sparsity and 12\% for 80\% sparsity. For low sparsities (30\% to 50\%), the two methods are comparable, since the accuracy difference is less than 0.5\%.

\begin{table}[h!t]
    \caption{Comparison of I-OBS and CBS  at different sparsity levels}
    \centering
\begin{tabular}{ccc}
    \toprule
    \textbf{sparsity} & \textbf{I-OBS} & \textbf{CBS} \\
    \midrule
    30\% &         71.6  & \textbf{71.88} \\
    40\% &         71.3  & \textbf{71.45} \\
    50\% & \textbf{70.6} &         70.21  \\
    60\% & \textbf{68.4} &         66.37  \\
    70\% & \textbf{60.7} &         55.11  \\
    80\% & \textbf{27.9} &         16.38  \\

    \midrule
    0\% (dense) & 71.95\% & \\
    \bottomrule
\end{tabular}
\label{tab:iobs-cbs}
\end{table}

\section{Deferred Proofs}

\subsection{Stochastic I-OBS}
\label{sec:sto-IOBS}

To enhance the computational efficiency of the I-OBS approach \eqref{WT-iter} to large scale problems, we introduce certain relaxations to approximate the full-batch gradient $\nabla f(\theta_t)$ and the Hessian $\rmH_t$.

Suppose we have access to stochastic gradients $g(\theta)$ that is an unbiased estimator of the full-batch gradient, namely $\E[g(\theta)] = \nabla f(\theta)$. Inspired by the Fisher approximation \citep{amari1998natural}, we over-approximate the Hessian using the outer products of stochastic gradients as follows:
\begin{equation}\label{fisher-smooth}
    \nabla^2 f(\theta) \preceq \lambda\rmI + \E[\rmG(\theta)], \text{ with } \rmG(\theta) = \tfrac{g(\theta)g(\theta)^\top}{\|g(\theta)\|_2},
\end{equation}
where $\lambda\ge0$ and for two matrices $\rmA,\rmB$ the inequality $\rmA\preceq\rmB$ means that $\rmB-\rmA$ is positive semi-definite. Compared to the Fisher approximation we scaled the second term inside the expectation by divided with the norm $\|g(\theta)\|_2$ to avoid scaling issues. Since the stochastic matrix composed of outer product inside the expectation is positive semi-definite, the condition \eqref{fisher-smooth} holds automatically under $L$-smoothness assumption with $\lambda=L$.

For example, if we use full-batch gradient $g(\theta)=\nabla f(\theta)$, then the matrix $\E[\rmG(\theta)]$ is of rank 1. In this case $\lambda$ should be set to the largest eigenvalue $\lambda_{\max}(\rmH_t)$ of the Hessian. However, when batch size of $g(\theta)$ is smaller, then the expectation of stochastic rank-1 matrices $\rmG(\theta)$ would potentially be of full-rank, allowing us to choose a smaller value for $\lambda$.

An alternative motivation behind  condition \eqref{fisher-smooth} is given by empirical observations on the correlation between loss smoothness and gradient norm \citep{Zhang2020clipping,Wang2022Adam}. Since $\lambda_{\max}(\rmG(\theta)) = \|g(\theta)\|_2$ and $\|\nabla f(\theta)\| \le \E[\|g(\theta)\|]$, condition \eqref{fisher-smooth} can be viewed as matrix version of $(L_0,L_1)$-smoothness condition by \cite{Zhang2020clipping}:
\begin{equation}\label{L0L1-smooth}
    \|\nabla^2 f(\theta)\|_2 \le L_0 + L_1 \|\nabla f(\theta)\|_2
\end{equation}
allowing the smoothness to grow with the gradient norm.

Applying the condition \eqref{fisher-smooth}, we can over-approximate the quadratic objective of \eqref{WT} by
$
\E[\langle g(\vtheta_{t}), \vtheta - \vtheta_{t} \rangle + \frac{|\langle g(\vtheta_{t}), \vtheta - \vtheta_{t} \rangle|^2}{2\|g(\theta_t)\|_2} + \frac{\lambda}{2}\|\theta-\theta_t\|^2 ].
$
The stochastic quadratic problem of the latter objective can be solved analytically if we specify the sparsity constraint.

\begin{restatable}{lemma}{LemmaSIOBS}
\label{lem:SI-OBS}
With fixed sparsity constraint $\rmI_S\theta=0$, the stochastic quadratic problem
$$
\argmin_{\theta: \rmI_S \theta = 0 }\; \langle g(\vtheta_{t}), \theta - \theta_{t} \rangle + \frac{|\langle g(\vtheta_{t}), \vtheta - \vtheta_{t} \rangle|^2}{2\|g(\theta_t)\|_2} + \frac{\lambda}{2}\|\theta-\theta_t\|^2
$$
has the following solution for the update
$$
\theta_{t+1} = \rmE_Q \left(\theta_t - \tfrac{1}{\lambda + \|\rmE_Q g(\theta_t)\|_2^2 / \|g(\theta_t)\|_2} g(\theta_t) \right),
$$
where the mask $Q$ is the complement of $S$.
\end{restatable}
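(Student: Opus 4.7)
The plan is to use the linear structure of the feasible set combined with the fact that the Hessian of the objective is a scaled identity plus a rank-one term along $g(\theta_t)$, so that the minimizer is forced to lie on a one-dimensional line after accounting for the support constraint. Concretely, I would first eliminate the sparsity constraint by restricting to $\mathrm{range}(\rmE_Q)$ where $Q = [d]\setminus S$. Working under the iterative convention that $\rmI_S \theta_t = 0$ (so $\rmE_Q \theta_t = \theta_t$; more generally the $\rmE_S$-terms only contribute an additive constant that does not affect the $\argmin$), I would introduce the free variable $w = \theta - \theta_t$ living in $\mathrm{range}(\rmE_Q)$. Denoting $g = g(\theta_t)$ and $g_Q = \rmE_Q g$, and using $\langle g, w\rangle = \langle g_Q, w\rangle$ for $w$ supported on $Q$, the sub-problem reduces to the unconstrained quadratic
\begin{equation*}
\min_{w \in \mathrm{range}(\rmE_Q)} \langle g_Q, w\rangle + \frac{\langle g_Q, w\rangle^2}{2\|g\|_2} + \frac{\lambda}{2}\|w\|_2^2.
\end{equation*}

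Next, I would set the $Q$-restricted gradient to zero, obtaining $g_Q + \|g\|_2^{-1}\langle g_Q, w\rangle\, g_Q + \lambda w = 0$. This forces $w$ to be proportional to $g_Q$; substituting the ansatz $w = -\alpha g_Q$ collapses the equation to the scalar condition $\alpha\bigl(\lambda + \|g_Q\|_2^2/\|g\|_2\bigr) = 1$, so $\alpha = 1/\bigl(\lambda + \|g_Q\|_2^2/\|g\|_2\bigr)$. Reassembling $\theta_{t+1} = \theta_t + w = \rmE_Q\bigl(\theta_t - \alpha g\bigr)$ then matches the claimed closed form.

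The main obstacle to anticipate is that the rank-one term $\langle g_Q, w\rangle^2$ a priori couples the optimization across all directions in $Q$; the key simplification is that after differentiating this coupling collapses onto $\mathrm{span}(g_Q)$, reducing an apparently high-dimensional optimization to a one-dimensional scalar equation. A minor bookkeeping task is confirming that the $\rmE_S$-components of $\theta_t$ enter only as additive constants in the objective, which follows by expanding $\|\theta - \theta_t\|_2^2 = \|\rmE_Q(\theta - \theta_t)\|_2^2 + \|\rmE_S\theta_t\|_2^2$ under $\rmE_S\theta = 0$.
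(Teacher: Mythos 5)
Your proof is correct and takes essentially the same approach as the paper's: the paper arrives at the identical stationarity condition $\rmE_Q\left[g(\theta_t)\left(1+\langle g(\theta_t),\theta-\theta_t\rangle/\|g(\theta_t)\|_2\right)+\lambda(\theta-\theta_t)\right]=0$ via Lagrange multipliers rather than by directly eliminating the constraint, and then performs the same rank-one collapse onto $\mathrm{span}(\rmE_Q g(\theta_t))$ and the same scalar solve for the step size. One small caveat: your parenthetical claim that the $\rmE_S$-components of $\theta_t$ ``only contribute an additive constant'' fails for the rank-one term (it generates a cross term linear in $w$ that would perturb the step size by $-\langle \rmE_S g(\theta_t),\rmE_S\theta_t\rangle/\|g(\theta_t)\|_2$), but this is moot under the convention $\rmI_S\theta_t=0$ that your main argument adopts --- a convention the paper's own proof also tacitly requires when it writes $\theta-\theta_t=\rmE_Q(\theta-\theta_t)$.
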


In general, the closed form solution for the mask $Q$ is not tractable. For this reason, we propose the following stochastic version of I-OBS:
\begin{equation*}
\theta_{t+1} = T_k \left(\theta_t - \tfrac{1}{\lambda + \|\rmE_{Q^t} g(\theta_t)\|_2^2 / \|g(\theta_t)\|_2} g(\theta_t) \right),
\end{equation*}
with $Q^t = \supp\theta_t$.

\subsection{Proof of Lemma \ref{lemma:kiht from ppm}}

\label{app:kiht from ppm}

\LemmaIHT*
\begin{proof}[Proof of Lemma \ref{lemma:kiht from ppm}]
Plugging the linear model $\phi_t(\theta) = f(\theta_t) + \langle \nabla f(\theta_t), \theta - \theta_t \rangle$ into the problem \eqref{PPM-IHT} and dropping the constant term $f(\theta_t)$ in the $\argmin$, we get the following problem:
\begin{equation}
\label{apx:ppm-iht}
    \argmin_{\theta : \|\theta\|_0\le k} \left\{ \langle \nabla f(\theta_{t}), \theta - \theta_{t} \rangle + \frac{1}{2\eta} \| \theta - \theta_{t}\|_2^2 \right\}.
\end{equation}

We break this problem into two sub-problems: first we solve the problem with fixed sparsity mask, then optimize over the mask. Formally, for a given mask $Q \subseteq [d]$ with cardinality $|Q| = k$, we solve
\begin{equation}
\label{derive kIHT: step 1}
\theta^Q_{t+1} = \argmin_{\theta: \rmI_S \theta = 0 } \left\{ \langle \nabla f(\theta_{t}), \theta - \theta_{t} \rangle + \frac{1}{2\eta} \| \theta - \theta_{t}\|_2^2 \right\},
\end{equation}
where $S = [d]\setminus Q$ is the complement of the mask, i.e., the coordinates that we want to drop. Afterwards, we find the optimal mast $Q^{t+1}$ by solving
\begin{equation}
\label{derive kIHT: step 2}
Q^{t+1} = \argmin_{Q : |Q|=k} \left\{ \langle \nabla f(\theta_{t}), \theta^Q_{t+1} - \theta_{t} \rangle + \frac{1}{2\eta} \| \theta^Q_{t+1} - \theta_{t}\|_2^2 \right\}.
\end{equation}
Then, the solution to \eqref{apx:ppm-iht} would be $\theta_{t+1} = \theta_{t+1}^{Q^{t+1}}$.

Let us start solving the first sub-problem in \eqref{derive kIHT: step 1}. As the problem is convex (convex quadratic objective with linear constraints), we can solve it with the method of Lagrange multipliers. With a modified primal variable $\delta = \theta - \theta_t\in\R^d$ and dual variable $\lambda\in\R^{|S|}$ the Lagrangian takes the form
\begin{equation*}
	L(\delta, \lambda) =  \langle \nabla f(\theta_{t}), \delta \rangle + \frac{1}{2\eta} \|\delta\|_2^2 + \lambda^\top (  \rmI_S (\delta+\vtheta_{t})).
\end{equation*}

Applying Karush–Kuhn–Tucker (KKT) conditions to our problem yields
\begin{eqnarray*}
    0 &=& \nabla_{\delta} L(\delta^*, \lambda^*) = \nabla f(\theta_t) + \frac{1}{\eta}\delta^* + \rmI_S^\top\lambda\\
    0 &=& \nabla_{\lambda} L(\delta^*, \lambda^*) = \rmI_S(\delta^* + \theta_t).
\end{eqnarray*}
Solving the first equation for $\delta$ variable, we get $\delta^* = - \eta(\nabla f(\theta_t) + \rmI_S^\top\lambda^*)$. Plugging this into the second equation gives
$$
0 = \rmI_S(- \eta(\nabla f(\theta_t) + \rmI_S^\top\lambda^*) + \theta_t)
= \rmI_S(\theta_t - \eta\nabla f(\theta_t)) - \eta \rmI_S\rmI_S^\top\lambda^*.
$$
Noticing that $\rmI_S\rmI_S^\top$ is the identity matrix in $\R^{|S|\times|S|}$ and $\rmI_S^\top\rmI_S = \rmE_S$, we conclude
\begin{eqnarray*}
    \lambda^* &=& \frac{1}{\eta}\rmI_S(\theta_t - \eta\nabla f(\theta_t)), \\
    \delta^* &=& -\eta\nabla f(\theta_t) - \rmE_S(\theta_t - \eta\nabla f(\theta_t)),
\end{eqnarray*}
which implies that the solution to our first sup-problem \eqref{derive kIHT: step 1} can be given as
\begin{align*}
\theta^Q_{t+1}
&= \delta^* + \theta_t \\
&= \theta_t - \eta\nabla f(\theta_t) - \rmE_S(\theta_t - \eta\nabla f(\theta_t)) \\
&= (\rmI - \rmE_S)(\theta_t - \eta\nabla f(\theta_t))
= \rmE_Q(\theta_t - \eta\nabla f(\theta_t)).
\end{align*}

To solve the second sub-problem \eqref{derive kIHT: step 2}, we plug in the obtained solution $\theta_{t+1}^Q$ and simplify the objective function as follows:

\begin{align*}
\langle \nabla f(\theta_{t}), \theta^Q_{t+1} - \theta_{t} \rangle + \frac{1}{2\eta} \| \theta^Q_{t+1} - \theta_{t}\|_2^2
&= \frac{1}{2\eta}\left(\|\theta_{t+1}^Q - \theta_t + \eta\nabla f(\theta_t)\|_2^2 - \|\eta\nabla f(\theta_t)\|_2^2 \right) \\
&= \frac{1}{2\eta}\|(\rmI - \rmE_Q)(\theta_t - \eta\nabla f(\theta_t))\|_2^2 - \frac{\eta}{2}\|\nabla f(\theta_t)\|_2^2 \\
&= \frac{1}{2\eta}\|\rmE_S(\theta_t - \eta\nabla f(\theta_t))\|_2^2 - \frac{\eta}{2}\|\nabla f(\theta_t)\|_2^2.
\end{align*}

Recall that we need to minimize the obtained expression with respect to the mask $Q$, or equivalently its complement $S$. As the second term, $- \frac{\eta}{2}\|\nabla f(\theta_t)\|_2^2$, does not depend on the mask, we can ignore it. To minimize the first term, $\|\rmE_S(\theta_t - \eta\nabla f(\theta_t))\|_2^2$, we need to choose the projection set $S$ corresponding to the coordinates of the vector $\theta_t - \eta\nabla f(\theta_t)$ with minimal absolute values. In other words, the mask $Q$ should be chosen to be the coordinates corresponding to the largest magnitudes of the vector, namely $Q^{t+1} = \supp\left( T_k(\theta_{t} - \eta\nabla f(\theta_{t})\right)$.

Thus, combining the solutions of these two problems we conclude that $\vtheta_{t+1} = T_k(\theta_{t} - \eta \nabla f(\theta_{t}))$.
\end{proof}

\subsection{Proof of Lemma \ref{lemma:WT-full update}}

\LemmaIOBS*
\begin{proof}[Proof of Lemma \ref{lemma:WT-full update}]

To solve the optimization problem in \eqref{WT-iter}, we break this problem into two parts. First, we fix subset $Q\subseteq [d]$ with size $k$ and complement $S = [d]\setminus Q$, then we find the optimal set $Q$ (or equivalently $S$). The first step is to solve the following quadratic optimization problem:
\begin{align}
	\vtheta_{t+1}^S &= \argmin_{\vtheta: \rmI_S\vtheta = 0  }\; \langle \nabla f(\vtheta_{t}), \vtheta - \vtheta_{t} \rangle + \frac{1}{2} \langle \vtheta - \vtheta_{t}, \rmH_t (\vtheta - \vtheta_{t}) \rangle, \label{WT-iter-1} 
\end{align}

Next, we aim to find the optimal set to prune, which is given by
\begin{equation}
    S^{t+1} = \argmin_{S\colon |S|=d-k} \langle \nabla f(\vtheta_{t}), \vtheta_{t+1}^S - \vtheta_{t} \rangle
    + \frac{1}{2} \langle \vtheta_{t+1}^S - \vtheta_{t}, \rmH_t (\vtheta_{t+1}^S - \vtheta_{t}) \rangle. \label{WT-iter-2}
\end{equation}
Once we solve both the problems (\ref{WT-iter-1}) and (\ref{WT-iter-2}), we set $\vtheta_{t+1} = \vtheta_{t+1}^{S^{t+1}}$. 

To solve \eqref{WT-iter-1}, notice that the optimization problem is convex with $S$ linear equality constraints. Therefore we can solve it via KKT conditions. With a change of variable $\delta = \vtheta - \vtheta_t \in \R^d$, we define the Lagrangian with multipliers $\lambda\in\R^{|S|}$ as follows:
	\begin{align*}
		L(\delta, \lambda) 
  		&= \langle \nabla f(\vtheta_{t})  , \delta \rangle + \frac{1}{2} \langle \delta, \rmH_t \delta \rangle + \langle \vtheta_{t}+\delta, \rmI_S^\top\lambda \rangle.
	\end{align*}
Then the KKT conditions become
\begin{align*}
    0 &= \nabla_{\delta} L(\delta^*, \lambda^*) = \nabla f(\vtheta_{t}) +  \rmH_t \delta^* + \rmI_S^\top\lambda^*. \\
    0 &= \nabla_{\lambda} L(\delta^*, \lambda^*) = \rmI_S(\delta^*+\theta_t).
\end{align*}
 
Note that the matrix $\rmH_t$ invertible since it is positive definite. From the first equation we obtain that $\delta^* = -\rmH_t^{-1} (\nabla f(\vtheta_{t}) + \rmI_S^\top\lambda^*)$. Then, plugging this into the second equation, we get $(\rmI_S \rmH_t^{-1}\rmI_S^\top)\lambda^* = \rmI_S(\vtheta_t - \rmH_t^{-1}\nabla f(\vtheta_t))$. Note that $\rmI_S \rmH_t^{-1}\rmI_S^\top$ is basically the matrix $\rmH_t^{-1}$ removing the rows and columns outside $S$. 

Next,  we claim that the matrix  $\rmI_S \rmH_t^{-1}\rmI_S^\top$ is also positive definite. Indeed,  by the interlacing property in Lemma \ref{lemma:interlacing of sub matrix}, we have: \begin{align*}
    0 < \frac{1}{\lambda_d(\rmH_t)} \leq \lambda_1(\rmH_t^{-1}) \leq \lambda_1(\rmI_S \rmH_t^{-1}\rmI_S^\top),
\end{align*} which implies that all eigenvalues of $\rmI_S \rmH_t^{-1}\rmI_S^\top$ are positive, hence it is also invertibility. Therefore, we get
\begin{align}
    \lambda^* &= \left(\rmI_S \rmH_t^{-1}\rmI_S^\top\right)^{-1} \rmI_S(\vtheta_t - \rmH_t^{-1}\nabla f(\vtheta_t)) =: \lambda_{t,S} \label{eq:lambda-star}\\
    \delta^* &= -\rmH_t^{-1}\nabla f(\vtheta_t) - \rmH_t^{-1}\rmI_S^\top \lambda^* =: \delta_{t,S}. \label{eq:delta-star}
\end{align}

Finally, we have the following update rule for I-OBS:
\begin{equation}
\label{singel-pruned-model}
    \theta_{t+1}^S 
    = \vtheta_t + \delta_{t,S}
    = \left(\rmI - \rmH_t^{-1}\rmH_t^S  \right) \left( \vtheta_t -\rmH_t^{-1}\nabla f(\vtheta_t) \right), 
\end{equation}
where $\rmH_t^S = \rmI_S^\top \left(\rmI_S \rmH_t^{-1}\rmI_S^\top\right)^{-1} \rmI_S \in\R^{d\times d}$. Next, we should pick the optimal $S^*=S^{t+1}$ via \eqref{WT-iter-2}, that is, we let: 
\begin{equation*}
    S^{t+1} = \argmin_{S \subseteq [d], |S| = d-k} \langle \nabla f(\vtheta_t), \vtheta_{t+1}^S - \vtheta_t \rangle
    + \frac{1}{2} \langle \vtheta_{t+1}^S - \vtheta_t , \rmH_t (\vtheta_{t+1}^S - \vtheta_t)\rangle
\end{equation*}

Plugging the expression \eqref{singel-pruned-model} of $\theta_{t+1}^S$ into the above definition of $S^{t+1}$, and ignoring terms that do not depend on $S$, we have
\begin{align}
    S^{t+1}
    &= \argmin_{S\colon|S|=d-k} \langle \nabla f(\theta_t), \theta_{t+1}^S - \theta_t \rangle
    + \frac{1}{2} \langle \theta_{t+1}^S - \theta_t, \rmH_t(\theta_{t+1}^S - \theta_t) \rangle \notag\\
    &= \argmin_{S\colon|S|=d-k} \langle \nabla f(\theta_t), \theta_{t+1}^S \rangle
    + \frac{1}{2} \langle \theta_{t+1}^S, \rmH_t\theta_{t+1}^S \rangle
    - \langle \theta_t, \rmH_t\theta_{t+1}^S \rangle \notag\\
    &=\argmin_{S\colon|S|=d-k} -\langle \rmH_t\theta_{t+1}^S, \theta_t - \rmH_t^{-1}\nabla f(\theta_t) \rangle
    + \frac{1}{2} \langle \theta_{t+1}^S, \rmH_t\theta_{t+1}^S \rangle \notag\\
    &= \argmin_{S\colon|S|=d-k}
    -\langle (\rmH_t - \rmH_t^S) (\theta_t - \rmH_t^{-1}\nabla f(\theta_t)), \theta_t - \rmH_t^{-1}\nabla f(\theta_t) \rangle \notag\\
    & =  \argmin_{S\colon|S|=d-k} (\theta_t - \rmH_t^{-1}\nabla f(\theta_t))^\top \rmH_t^S (\theta_t - \rmH_t^{-1}\nabla f(\theta_t)). \label{eq:S^t+1}
\end{align}

Finally, setting $\vtheta_{t+1} = \vtheta_{t+1}^{S^{t+1}}$ finishes the proof.
\end{proof}

\subsection{Proof of Theorem \ref{thm:WT-converge}} \label{apx:pf_WT_converge}

We first recall  Theorem \ref{thm:WT-converge} below: 

\TheoremIOBS*
\begin{proof}[Proof of Theorem \ref{thm:WT-converge}]

    From the update rule, it is easy to see that: \begin{align*}
        \vtheta_{t+1} - \vtheta^* = \vtheta_t - \rmH_t^{-1} \nabla f(\vtheta_t) - \vtheta^* - \rmH_t^{-1}\rmH_t^{S^{t+1}} \left( \vtheta_t -\rmH_t^{-1}\nabla f(\vtheta_t) \right)
    \end{align*}
    
For simplicity, we define $ Q^{*} = \text{supp}( \vtheta^*), Q^t = \text{supp}( \vtheta_t)$, and recall that by definition, we have
\begin{align*}
S^{t+1}
&= \argmin_{S \subseteq [d], |S| = d-k} ( \vtheta_t - \rmH_t^{-1} \nabla f(\vtheta_t) )^\top \rmH_t^S (\vtheta_t - \rmH_t^{-1} \nabla f(\vtheta_t) ) \\
&= \argmin_{S \subseteq [d], |S| = d-k} (\vtheta_t - \rmH_t^{-1} \nabla f(\vtheta_t) )^T \rmI_S^\top( \rmI_S \rmH_t^{-1} \rmI_S^\top)^{-1} \rmI_S   (\vtheta_t - \rmH_t^{-1} \nabla f(\vtheta_t) ).
\end{align*}
Then we have
\begin{equation*}
\| \vtheta_{t+1} - \vtheta^* \|_2
\leq  \| \theta_t - \rmH_t^{-1} \nabla f(\vtheta_t) - \theta^* \|_2 + \| \rmH_t^{-1}\rmH_t^{S^{t+1}} \left( \vtheta_t -\rmH_t^{-1}\nabla f(\vtheta_t) \right)\|_2.
\end{equation*}

Next we want to control  $ \| \rmH_t^{-1}\rmH_t^{S^{t+1}} \left( \vtheta_t -\rmH_t^{-1}\nabla f(\vtheta_t) \right)\|_2$ term above. For simplicity, we define $\vtheta^+ := \vtheta_t -\rmH_t^{-1}\nabla f(\vtheta_t)$.
\begin{eqnarray*}
\left\|\rmH_t^{-1}\rmH_t^{S^{t+1}} \vtheta^+ \right\|_2^2
&=& (\theta^+)^\top \rmH_t^{S^{t+1}} \rmH_t^{-2} \rmH_t^{S^{t+1}} \theta^+ \\
&=& (\theta^+)^\top (\rmH_t^{S^{t+1}})^{1/2} (\rmH_t^{S^{t+1}})^{1/2} \rmH_t^{-2} (\rmH_t^{S^{t+1}})^{1/2} (\rmH_t^{S^{t+1}})^{1/2} \theta^+\\
&\le& \lambda_{\max}\left( (\rmH_t^{S^{t+1}})^{1/2} \rmH_t^{-2} (\rmH_t^{S^{t+1}})^{1/2} \right) (\theta^+)^\top \rmH_t^{S^{t+1}} \theta^+ \\
&=& \lambda_{\max}\left( \rmH_t^{-2} \rmH_t^{S^{t+1}} \right) (\theta^+)^\top \rmH_t^{S^{t+1}} \theta^+ \\
&=& \lambda_{\max}\left( \rmH_t^{-1} \rmH_t^{S^{t+1}} \rmH_t^{-1} \right) (\theta^+)^\top \rmH_t^{S^{t+1}} \theta^+ \\
&=&  \left\|\rmH_t^{-1} \rmH_t^{S^{t+1}} \rmH_t^{-1} \right\|_2 (\theta^+)^\top \rmH_t^{S^{t+1}} \theta^+,
\end{eqnarray*}
where we used the fact that matrices $\rmA\rmB$ and $\rmB\rmA$ have the same nonzero eigenvalues counting multiplicity, and for any symmetric and positive semi-definite matrix $\rmA$ it holds $\|\rmA\|_2 = \lambda_{\max}(\rmA)$. The norm $\|\rmH_t^{-1} \rmH_t^{S^{t+1}} \rmH_t^{-1} \|_2$ can be bounded by splitting it into two matrix operations by invoking Lemma \ref{proj-matrix} and strong convexity Assumption \ref{asm:str-cvx}:
$$
\left\|\rmH_t^{-1} \rmH_t^{S^{t+1}} \rmH_t^{-1} \right\|_2 \le \left\|\rmH_t^{-1} \rmH_t^{S^{t+1}} \right\|_2 \left\| \rmH_t^{-1} \right\|_2 \le \frac{1}{\mu}.
$$
Furthermore, based on \eqref{eq:S^t+1} we have $(\theta^+)^\top \rmH_t^{S^{t+1}} \theta^+ \le (\theta^+)^\top \rmH_t^{\tilde{S}} \theta^+$ for any set $\tilde{S}$ of the same cardinality as $S^{t+1}$.
Noticing that $\rmI_{\tilde S} \rmE_{\tilde S} = \rmI_{\tilde S}$, we get

\begin{eqnarray*}
(\theta^+)^\top \rmH_t^{\tilde{S}} \theta^+
&=& (\rmE_{\tilde{S}}\theta^+)^\top \rmH_t^{\tilde{S}} (\rmE_{\tilde{S}}\theta^+) \\
&=& (\rmE_{\tilde{S}}\theta^+)^\top \rmH_t^{1/2} \rmH_t^{-1/2} \rmH_t^{\tilde{S}} \rmH_t^{-1/2} \rmH_t^{1/2} (\rmE_{\tilde{S}}\theta^+) \\
&\le& \lambda_{\max}\left( \rmH_t^{-1/2} \rmH_t^{\tilde{S}} \rmH_t^{-1/2} \right) (\rmE_{\tilde{S}}\theta^+)^\top \rmH_t (\rmE_{\tilde{S}}\theta^+) \\
&=& \lambda_{\max}\left( \rmH_t^{-1} \rmH_t^{\tilde{S}} \right) (\rmE_{\tilde{S}}\theta^+)^\top \rmH_t (\rmE_{\tilde{S}}\theta^+) \\
&\le& L \left\| \rmE_{\tilde{S}}\theta^+ \right\|_2^2
= L \left\| \rmI_{\tilde{S}}\theta^+ \right\|_2^2,
\end{eqnarray*}
where we used the fact that $\rmH_t^{-1} \rmH_t^{\tilde{S}}$ is a projection matrix (Lemma \ref{proj-matrix}) and restricted first-order smoothness condition (Assumption \ref{asm:1-smooth}).
Then, choosing $\tilde{S} = \argmin_{S: |S| = d-k}  \| \rmI_S\theta^+ \|_2^2$ and using Lemma \ref{lemma:topk,sparse}, we have that: \begin{eqnarray*}
\left\| \rmI_{\tilde{S}}(\vtheta_t - \rmH_t^{-1} \nabla f(\vtheta_t)) \right\|_2^2
&=& \| T_k( \vtheta_t - \rmH_t^{-1} \nabla f(\vtheta_t) ) - (\vtheta_t - \rmH_t^{-1} \nabla f(\vtheta_t)) \|_2^2 \\
&\leq& \frac{d-k^*}{d-k} \| \vtheta_t - \rmH_t^{-1} \nabla f(\vtheta_t) - \theta^* \|_2^2.
\end{eqnarray*}
Hence, combining these bounds we arrive
\begin{align*}
\| \rmH_t^{-1}\rmH_t^{S^{t+1}} \left( \vtheta_t -\rmH_t^{-1}\nabla f(\vtheta_t) \right) \|_2
\leq  \sqrt{\frac{L}{\mu} \frac{d-k^*}{d-k}} \|\vtheta_t - \rmH_t^{-1} \nabla f(\vtheta_t) - \theta^*\|_2.
\end{align*}

And the rest follows from the standard proof of Newton's methods, see e.g. \citep[pg. 38-39]{nesterov2018lectures}. We redo the proof here for completeness:
\begin{align*}
\| \vtheta_{t}- \rmH_t^{-1}\nabla f(\vtheta_{t}) - \vtheta^*  \|_2
	& \leq \left\| \vtheta_{t}- \vtheta^* - \rmH_t^{-1} \int_{0}^{1} \nabla^2 f(\vtheta^* + \tau (\vtheta_{t} - \vtheta^*) )(\theta_t-\theta^*) \, d\tau \right\|_2 \\
	& =  \left\|  \rmH_t^{-1} \int_{0}^{1} \left(\nabla^2 f(\vtheta_{t} )- \nabla^2 f(\vtheta^* + \tau (\vtheta_{t} - \vtheta^*) ) \right) (\vtheta_{t} - \vtheta^*) \, d\tau \right\|_2 \\
	& \leq  \left\| \rmH_t^{-1}\right\|_2 \left\|\int_{0}^{1} \left(\nabla^2 f(\vtheta_{t} )- \nabla^2 f(\vtheta^* + \tau (\vtheta_{t} - \vtheta^*) ) \right)\, d\tau \right\|_2  \left\|  \vtheta_{t} - \vtheta^*  \right\|_2 \\
    &\le \frac{1}{\mu} \cdot M\|\theta_t-\theta^*\|_2 \int_{0}^{1}\tau\,d\tau \cdot \|\theta_t-\theta^*\|_2 \\
	& \leq \frac{M}{2\mu} \| \vtheta_{t} - \vtheta^*  \|_2^2.
\end{align*}
Finally, combining this with the previous bounds we get
\begin{eqnarray*}
    \|\theta_{t+1} - \theta^*\|_2
    &\le& \| \vtheta_t - \rmH_t^{-1} \nabla f(\vtheta_t) - \vtheta^* \|_2 + \| \rmH_t^{-1}\rmH_t^{S^{t+1}} \left( \vtheta_t -\rmH_t^{-1}\nabla f(\vtheta_t) \right) \|_2 \\
    &\le& \left(1 + \sqrt{\frac{L}{\mu} \frac{d-k^*}{d-k}}\right)\| \vtheta_t - \rmH_t^{-1} \nabla f(\vtheta_t) - \vtheta^* \|_2 \\
    &\le& \left(1 + \sqrt{\frac{L}{\mu} \frac{d-k^*}{d-k}}\right) \cdot \frac{M}{2\mu}\|\theta_t-\theta^*\|_2^2,
\end{eqnarray*}
which concludes the proof of Theorem \ref{thm:WT-converge}.
\end{proof}

\subsection{Proof of Lemma \ref{lemma:wt_topk,converge}} \label{apx:pf_topkWT}

\LemmaTopKIOBS*
\begin{proof}[Proof of Lemma \ref{lemma:wt_topk,converge}]
    Similar to the proof of Theorem \ref{thm:WT-converge}, we define $Q^{t} = \supp(\theta_t), Q^* = \supp(\theta^*)$. Notice that by definition
    \begin{equation}\label{eq:topk-proj}
    \theta_{t+1}
    = T_k(\theta_{t} - \rmH_t^{-1} \nabla f(\theta_t))
    = \rmE_{Q^{t+1}}(\theta_{t} - \rmH_t^{-1} \nabla f(\theta_t)).
    \end{equation}
    Then we have that
    \begin{align}
        \| \theta_{t+1} - \theta^* \|_2 
        &= \| \rmE_{Q^{t+1} \cup Q^*}(\theta_{t+1} - \theta^*) \|_2 \notag\\
        &= \| \rmE_{Q^{t+1} \cup Q^*}(T_k(\theta_{t} - \rmH_t^{-1} \nabla f(\theta_t)) - \theta^*) \|_2 \notag\\
        &\le \| \rmE_{Q^{t+1} \cup Q^*}(T_k(\theta_{t} - \rmH_t^{-1} \nabla f(\theta_t)) - \rmE_{Q^{t+1} \cup Q^*}(\theta_{t} - \rmH_t^{-1} \nabla f(\theta_t)) \|_2 \notag\\
        &\quad + \| \rmE_{Q^{t+1} \cup Q^*}(\theta_{t} - \rmH_t^{-1} \nabla f(\theta_t) - \theta^*) \|_2. \label{eq:error-norm-1}
    \end{align}

    We further upper bound the second term by dropping the projection matrix $\rmE_{Q^{t+1} \cup Q^*}$. For the first term, \eqref{eq:topk-proj} implies
    $$
    \rmE_{Q^{t+1} \cup Q^*} T_k(\theta_{t} - \rmH_t^{-1} \nabla f(\theta_t))
    = T_k\left(\rmE_{Q^{t+1} \cup Q^*}(\theta_{t} - \rmH_t^{-1} \nabla f(\theta_t))\right).
    $$
    Hence, the first term can be upper bounded by Lemma \ref{lemma:topk,sparse}:

\begin{align*}
    &\| \rmE_{Q^{t+1} \cup Q^*}(T_k(\theta_{t} - \rmH_t^{-1} \nabla f(\theta_t)) - \rmE_{Q^{t+1} \cup Q^*}(\theta_{t} - \rmH_t^{-1} \nabla f(\theta_t)) \|_2^2 \\
    &= \| T_k\left(\rmE_{Q^{t+1} \cup Q^*}(\theta_{t} - \rmH_t^{-1} \nabla f(\theta_t))\right) - \rmE_{Q^{t+1} \cup Q^*}(\theta_{t} - \rmH_t^{-1} \nabla f(\theta_t)) \|_2^2 \\
    &\le \frac{k^*}{k} \|\rmE_{Q^{t+1} \cup Q^*}(\theta_{t} - \rmH_t^{-1} \nabla f(\theta_t)) - \theta^*\|_2^2\\
    &\le \frac{k^*}{k} \|\theta_{t} - \rmH_t^{-1} \nabla f(\theta_t) - \theta^*\|_2^2\\
\end{align*}
Using the obtained bounds, \eqref{eq:error-norm-1} implies
$$
\|\theta_{t+1} - \theta^*\|_2 \le \left(1 + \sqrt{\frac{k^*}{k}}\right) \|\theta_{t} - \rmH_t^{-1} \nabla f(\theta_t) - \theta^*\|_2.
$$
The rest follows from standard analysis of Newton's methods showing that
\begin{align*}
    \| \theta_{t} - \rmH_t^{-1} \nabla f(\theta_t) - \theta^*\|_2 \leq \frac{M}{2 \mu} \| \theta_t - \theta^* \|_2^2.
\end{align*}
\end{proof}

\subsection{Proof of Lemma \ref{lem:SI-OBS}}

\LemmaSIOBS*
\begin{proof}[Proof of Lemma \ref{lem:SI-OBS}]
Due to the convexity of the quadratic objective and linearity of constraints, we solve the problem with KKT conditions. The Lagrangian with multipliers $\xi\in\R^{|S|}$ will be\begin{align}
    L(\theta, \xi) = g(\theta_t)^\top (\theta  - \theta_t) + \frac{| g(\theta_t)^\top (\theta - \theta_t) |^2}{2\|g(\theta_t)\|_2} + \frac{\lambda}{2} \| \theta  - \theta_t \|_2^2 + \xi^\top \rmI_S \theta
\end{align}

Taking the gradient with respect to each variable $\theta$ and $\xi$ separately, we have
\begin{align*}
    0 &= \nabla_{\theta} \mathcal{L}(\theta, \xi) =  g(\theta_t)  + \left( \frac{g(\theta_t) g(\theta_t)^\top}{\|g(\theta_t)\|_2} + \lambda \rmI\right) ( \theta - \theta_t) + \rmI_S^\top \xi \\
    0 &= \nabla_{\xi} \mathcal{L}(\theta, \xi) = \rmI_S \theta
\end{align*}

Multiplying the first equation by $\rmI_S$, we solve for $\xi$:
\begin{align*}
    \xi = - \rmI_S \left( g(\theta_t)  + \left( \frac{g(\theta_t) g(\theta_t)^\top}{\|g(\theta_t)\|_2} + \lambda \rmI\right) ( \theta - \theta_t) \right)
\end{align*}
Then we plug it back in the first equation
\begin{align}
0
&= g(\theta_t)  + \left( \frac{g(\theta_t) g(\theta_t)^\top}{\|g(\theta_t)\|_2} + \lambda \rmI\right) ( \theta - \theta_t) - \rmI_S^\top \rmI_S \left( g(\theta_t)  + \left( \frac{g(\theta_t) g(\theta_t)^\top}{\|g(\theta_t)\|_2} + \lambda \rmI\right) ( \theta - \theta_t) \right) \notag\\
&= \left(\rmI - \rmE_S \right) \left[ g(\theta_t)  + \left( \frac{g(\theta_t) g(\theta_t)^\top}{\|g(\theta_t)\|_2} + \lambda \rmI\right) ( \theta - \theta_t)\right] \notag\\
&= \rmE_Q \left[ g(\theta_t) \left(1 + \frac{g(\theta_t)^\top (\theta-\theta_t)}{\|g(\theta_t)\|_2} \right) +  \lambda( \theta - \theta_t)\right]. \label{eqn:lagran_diff_theta}
\end{align}

From this and the fact that $\supp(\theta)\subseteq Q$ (since $I_S\theta=0$) implies that:
\begin{equation}\label{eq:update-eta}
    \theta - \theta_t = \rmE_Q (\theta - \theta_t) = -\eta \rmE_Q g(\theta_t)
\end{equation}
for some scalar value $\eta$. It remains to derive the expression for $\eta$, which can be done by plugging the obtained expression for $\theta-\theta_t = -\eta \rmE_Q g(\theta_t)$ into \eqref{eqn:lagran_diff_theta} and solving for the parameter $\eta$.
\begin{eqnarray*}
    0 &=&
    \rmE_Q \left[ g(\theta_t) \left(1 + \frac{g(\theta_t)^\top (\theta-\theta_t)}{\|g(\theta_t)\|_2} \right) +  \lambda( \theta - \theta_t)\right] \\
    &=&
    \rmE_Q g(\theta_t) \left(1 + \frac{g(\theta_t)^\top (\theta-\theta_t)}{\|g(\theta_t)\|_2} \right) +  \lambda \rmE_Q(\theta - \theta_t) \\
    &=& \rmE_Q g(\theta_t) \left(1 - \eta \frac{g(\theta_t)^\top \rmE_Q g(\theta_t)}{\|g(\theta_t)\|_2}\right) - \eta\lambda \rmE_Q g(\theta_t) \\
    &=& \rmE_Q g(\theta_t)\left( 1 - \eta\frac{\|\rmE_Q g(\theta_t)\|_2^2}{\|g(\theta_t)\|_2} - \eta\lambda\right)
    \implies \eta = \frac{1}{\lambda + \|\rmE_Q g(\theta_t)\|_2^2 / \|g(\theta_t)\|_2}.
\end{eqnarray*}
The update rule \eqref{eq:update-eta} with the obtained expression of $\eta$ completes the proof.
\end{proof}

\section{Technical Lemmas}
\label{apx:technical}

\begin{lemma}
\label{lemma:prop-scvx-smt}
    Given a $\mu$-strongly convex and $L$-smooth function $f$, with the unique global minimizer $\theta^*$, we have the following properties: \begin{enumerate}
        \item $f$ satisfy $\mu$-PL inequality, namely for any $\theta$: \begin{equation*}
            \| \nabla f(\theta) \|_2^2 \geq 2 \mu(f(\theta) - f(\theta^*) )
        \end{equation*}

        \item $f$ is 'almost' are quadratic function, for any $\theta$: \begin{equation*}
            \frac{\mu}{2} \| \theta - \theta^*\|_2^2 \leq f( \theta ) - f(\theta^*) \leq \frac{L}{2} \| \theta - \theta^*\|_2^2
        \end{equation*}
    \end{enumerate}
\end{lemma}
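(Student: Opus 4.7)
The plan is to prove the two parts separately, both by specializing the defining inequalities of strong convexity and $L$-smoothness at the global minimizer $\theta^*$, where the first-order optimality condition $\nabla f(\theta^*) = 0$ eliminates linear terms.

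For the second claim (the ``almost quadratic'' two-sided bound), the lower bound follows directly from Assumption \ref{asm:str-cvx} applied with $\theta' = \theta^*$; the linear term vanishes and we are left with $f(\theta) - f(\theta^*) \ge \frac{\mu}{2}\|\theta-\theta^*\|_2^2$. For the upper bound, I would invoke the standard descent-lemma consequence of $L$-smoothness, namely $f(\theta) \le f(\theta') + \langle \nabla f(\theta'), \theta-\theta'\rangle + \frac{L}{2}\|\theta-\theta'\|_2^2$, applied at $\theta' = \theta^*$; again the gradient term vanishes, giving $f(\theta) - f(\theta^*) \le \frac{L}{2}\|\theta-\theta^*\|_2^2$. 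This handles the second bullet without further work.

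For the first claim (Polyak--\L{}ojasiewicz inequality), the idea is to use strong convexity in the ``opposite direction,'' i.e.\ with the roles reversed: Assumption \ref{asm:str-cvx} with the pair $(\theta^*, \theta)$ yields
\begin{equation*}
f(\theta^*) \ge f(\theta) + \langle \nabla f(\theta), \theta^* - \theta \rangle + \tfrac{\mu}{2}\|\theta^* - \theta\|_2^2,
\end{equation*}
which rearranges to $f(\theta) - f(\theta^*) \le \langle \nabla f(\theta), \theta - \theta^*\rangle - \frac{\mu}{2}\|\theta-\theta^*\|_2^2$. Now Cauchy--Schwarz combined with Young's inequality $ab \le \frac{a^2}{2\mu} + \frac{\mu b^2}{2}$ bounds the inner product by $\frac{1}{2\mu}\|\nabla f(\theta)\|_2^2 + \frac{\mu}{2}\|\theta-\theta^*\|_2^2$, and the quadratic term cancels exactly, producing $f(\theta) - f(\theta^*) \le \frac{1}{2\mu}\|\nabla f(\theta)\|_2^2$, which is the PL inequality stated.

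There is no real obstacle: both parts are classical textbook consequences of the assumptions, and the only detail worth being careful about is that $L$-smoothness in Assumption \ref{asm:1-smooth} here should be interpreted as the usual unrestricted smoothness (since the lemma is stated for generic $\mu$-strongly convex, $L$-smooth $f$), so that the quadratic upper bound applies globally rather than only along sparse directions. The argument does not require differentiability beyond first order and does not interact with any of the sparsity machinery used elsewhere in the paper.
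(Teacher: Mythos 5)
Your proof is correct: both parts are the classical consequences of strong convexity and $L$-smoothness specialized at the minimizer (where $\nabla f(\theta^*)=0$ kills the linear terms), and your derivation of the PL inequality from the reversed strong-convexity inequality plus Young's inequality $ab \le \frac{a^2}{2\mu} + \frac{\mu b^2}{2}$ is a standard, valid route. The paper itself states this lemma without any proof, treating it as a textbook fact, so there is no argument to compare against; your write-up supplies the missing proof correctly, and your caveat that $L$-smoothness here must be the usual unrestricted notion rather than the restricted version in Assumption~\ref{asm:1-smooth} is exactly the right point to flag.
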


\begin{lemma}\label{lemma:res-smooth and hessian}
Given a twice differentiable function $f: \R^d \xrightarrow{} \R$, assume $f$ satisfy assumptions \ref{asm:str-cvx}, \ref{asm:1-smooth}, then we have the following properties: \begin{enumerate}
    \item $\rmH := \nabla^2 f(\vtheta)$ exists and is positive definite for any $\vtheta$.
    \item Given $S \in [d]$ with $|S| = k$ for any $0 <  k \leq d$ , given any $\alpha > 0$, then $ \rmH^S_\alpha = I_S \rmH^{-\alpha} I_S^\top$ is positive  definite, with: \begin{equation*}
        L^{-\alpha} \leq \lambda_1(\rmH^S_\alpha) \leq \dots \leq \lambda_{k}(\rmH^S_\alpha) \leq \mu^{-\alpha}
    \end{equation*}
\end{enumerate}
    
\end{lemma}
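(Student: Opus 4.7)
\textbf{Proof plan for Lemma \ref{lemma:res-smooth and hessian}.}

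The plan is to prove the two claims essentially separately. Part (1) will follow directly from $\mu$-strong convexity together with the standard fact that for a twice differentiable function this is equivalent to a lower Hessian bound. Specifically, I would observe that Assumption~\ref{asm:str-cvx} implies $v^\top \nabla^2 f(\theta) v \ge \mu \|v\|_2^2$ for every $v \in \R^d$ and every $\theta \in \R^d$; combined with symmetry of the Hessian this yields $\rmH \succeq \mu \rmI \succ 0$, so $\rmH$ is well-defined (twice differentiability), symmetric, and positive definite. In particular $\rmH$ is invertible and has spectrum contained in $[\mu, +\infty)$, so $\rmH^{-\alpha}$ is unambiguously defined through the spectral calculus for any $\alpha > 0$, and its eigenvalues are exactly $\{\lambda_i(\rmH)^{-\alpha}\}_{i=1}^d$.

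For Part (2), the key observation is that $\rmH^S_\alpha = \rmI_S \rmH^{-\alpha} \rmI_S^\top$ is precisely the $k \times k$ principal submatrix of $\rmH^{-\alpha}$ indexed by $S$. The plan is to characterize its spectrum via the Courant--Fischer variational formula: for any unit vector $y \in \R^k$, the Rayleigh quotient satisfies
\begin{equation*}
y^\top \rmH^S_\alpha y \;=\; (\rmI_S^\top y)^\top \rmH^{-\alpha} (\rmI_S^\top y),
\end{equation*}
and because $\rmI_S^\top$ is an isometric embedding ($\|\rmI_S^\top y\|_2 = \|y\|_2$), the Rayleigh quotient of $\rmH^S_\alpha$ at $y$ equals the Rayleigh quotient of $\rmH^{-\alpha}$ at the $k$-sparse vector $\rmI_S^\top y$. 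Taking min and max over unit $y$ then sandwiches every eigenvalue of $\rmH^S_\alpha$ between $\lambda_{\min}(\rmH^{-\alpha})$ and $\lambda_{\max}(\rmH^{-\alpha})$. Since $\mu \rmI \preceq \rmH \preceq L \rmI$, the function $x \mapsto x^{-\alpha}$ being monotone decreasing gives $L^{-\alpha} \rmI \preceq \rmH^{-\alpha} \preceq \mu^{-\alpha} \rmI$, yielding the desired chain $L^{-\alpha} \le \lambda_1(\rmH^S_\alpha) \le \dots \le \lambda_k(\rmH^S_\alpha) \le \mu^{-\alpha}$. Positive definiteness follows immediately since $L^{-\alpha} > 0$.

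\textbf{Main obstacle.} The only subtle point is the upper Hessian bound $\rmH \preceq L \rmI$, which is needed for the $L^{-\alpha}$ lower bound on $\lambda_1(\rmH^S_\alpha)$. Assumption~\ref{asm:1-smooth} as stated is only \emph{restricted} smoothness, controlling $v^\top \nabla^2 f(\theta) v$ when both $\theta$ and $v$ are sparse, not arbitrary directions. The cleanest fix is to use the fact, stated right after Assumption~\ref{asm:1-smooth}, that when $f$ is twice differentiable the more usual global $L$-smoothness (which does give $\rmH \preceq L \rmI$) is what the lemma implicitly relies on; I would therefore state and use the global version of the smoothness bound. Alternatively, if one insists on only the restricted bound, the argument still goes through whenever the Rayleigh quotient of $\rmH^{-\alpha}$ is evaluated on a direction that is $k$-sparse \emph{and} $\rmH$ admits an eigenbasis of sparse vectors; this is too strong in general, so the first route is preferable. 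No other step should present difficulty, as everything else is standard spectral calculus and the variational characterization of eigenvalues of principal submatrices.
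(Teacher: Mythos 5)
Your Part (1) and the upper bound $\lambda_k(\rmH^S_\alpha) \le \mu^{-\alpha}$ match the paper: strong convexity gives $\rmH \succeq \mu \rmI$ globally, and the principal-submatrix/Rayleigh-quotient argument (the paper invokes its interlacing Lemma~\ref{lemma:interlacing of sub matrix}) delivers the upper bound. The genuine gap is exactly where you flagged it, but your proposed resolution changes the statement rather than proving it. The lower bound $\lambda_1(\rmH^S_\alpha) \ge L^{-\alpha}$ does \emph{not} require $\rmH \preceq L\rmI$: the paper proves it under the restricted smoothness assumption alone via Jensen's inequality. Concretely, for a unit vector $u \in \R^{k}$ set $w = \rmI_S^\top u$, a unit vector supported on $S$ and hence sparse, and expand in an orthonormal eigenbasis $\{v_i\}$ of $\rmH$:
\begin{equation*}
w^\top \rmH^{-\alpha} w = \sum_{i=1}^d \lambda_i(\rmH)^{-\alpha}\,(w^\top v_i)^2 \;\ge\; \Bigl(\sum_{i=1}^d \lambda_i(\rmH)\,(w^\top v_i)^2\Bigr)^{-\alpha} = \bigl(w^\top \rmH w\bigr)^{-\alpha} \;\ge\; L^{-\alpha},
\end{equation*}
where the first inequality is Jensen applied to the convex function $x \mapsto x^{-\alpha}$ with the weights $(w^\top v_i)^2$ summing to one, and the last step uses the restricted smoothness bound $w^\top \rmH w \le L$ together with monotonicity of $x^{-\alpha}$. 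The smoothness assumption is only ever invoked on the single sparse direction $w$, never through $\lambda_{\max}(\rmH)$.

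In short, the idea you are missing is that a quadratic-form bound $w^\top \rmH w \le L$ in a fixed direction can be transferred to a quadratic-form bound on $\rmH^{-\alpha}$ in the \emph{same} direction by convexity of $x^{-\alpha}$; one therefore needs neither global control of $\lambda_{\max}(\rmH)$ nor a sparse eigenbasis (your second suggested escape route). Upgrading the hypothesis to global $L$-smoothness, as you propose, proves a strictly weaker lemma and would propagate an unnecessarily strong assumption into the convergence results, which are deliberately stated under restricted smoothness only.
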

\begin{proof}
    \par The first argument is directly from the strong convexity assumption.  
    \par For the second argument, the upper bound directly follows from the strong convexity assumption \ref{asm:str-cvx} and Lemma \ref{lemma:interlacing of sub matrix}. Indeed, we have: \begin{equation*}
        \lambda_k(\rmH^S_\alpha) \leq \lambda_{d}( \rmH^{-\alpha} ) = \lambda_1( \rmH )^{-\alpha} = \mu^{-\alpha}
    \end{equation*} 
    Next, we show that $\lambda_1(\rmH^{-\alpha}) \geq L^{-\alpha}$ under assumption \ref{asm:1-smooth}. Denote $v_1, ..., v_d$ the eigenvectors corresponding to $\lambda_1(\rmH), \dots, \lambda_d(\rmH)$, and let $\|v_i\|_2 = 1, \forall i \in [d]$ w.l.o.g. Given any vector $u \in \R^k$ such that $\|u\|_2 = 1$, we have that: \begin{align*}
        u^\top \rmH^S_\alpha u =  (I_S^\top u)^\top \rmH^{-\alpha} (I_S^\top u) = \sum_{i=1}^d \lambda_i(\rmH)^{-\alpha} ((I_S u)^\top v_i)^2
    \end{align*} Note that $\sum_{i=1}^d ((I_S^\top u)^\top v_i)^2 = \|u\|_2^2 = 1 $, and the function $g(x) = x^{-\alpha}$ is convex for $x > 0$. Thus  we have: \begin{align*}
        u^\top \rmH^S_\alpha u &= \sum_{i=1}^d \lambda_i(\rmH)^{-\alpha} ((I_S^\top u)^\top v_i)^2\\
        &\geq  (\sum_{i=1}^d \lambda_i(\rmH) ((I_S^\top u)^\top v_i)^2 )^{-\alpha} \\
        &= ( (I_S^\top u)^\top \rmH (I_S^\top u)  )^{- \alpha} \geq L^{-\alpha}
    \end{align*} where the first inequality is due to Jensen's inequality, and the last inequality follows from assumption \ref{asm:1-smooth}. The lower bound also implies that $H^S_\alpha$ is positive definite.

\end{proof}

\begin{lemma}\label{proj-matrix} For any set $S$, we have $\|\rmH_t^{-1}\rmH_t^S\|_2 \le 1$.
\end{lemma}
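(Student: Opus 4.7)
The plan is to show that $P := \rmH_t^{-1}\rmH_t^S$ is an $\rmH_t$-orthogonal projection by direct algebraic manipulation, and then extract the Euclidean spectral-norm bound from the resulting geometry.

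First, I would verify idempotence by direct computation:
$$P^2 = \rmH_t^{-1}\rmI_S^\top(\rmI_S\rmH_t^{-1}\rmI_S^\top)^{-1}\bigl[\rmI_S\rmH_t^{-1}\rmI_S^\top\bigr](\rmI_S\rmH_t^{-1}\rmI_S^\top)^{-1}\rmI_S = P,$$
since the bracketed factor collapses against the adjacent inverse. In particular every eigenvalue of $P$ equals $0$ or $1$, so the spectral radius satisfies $\rho(P)\le 1$.

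Second, I would observe that $\rmH_t P = \rmI_S^\top(\rmI_S\rmH_t^{-1}\rmI_S^\top)^{-1}\rmI_S$ is manifestly symmetric, hence $P^\top \rmH_t = \rmH_t P$. Combined with idempotence this makes $P$ an orthogonal projection with respect to the inner product $\langle u, v\rangle_{\rmH_t} := u^\top \rmH_t v$, which immediately yields $\|Px\|_{\rmH_t}\le\|x\|_{\rmH_t}$ for every $x$. Equivalently, writing $A := \rmI_S\rmH_t^{-1/2}$, the similarized matrix $\rmH_t^{-1/2}\rmH_t^S\rmH_t^{-1/2} = A^\top(AA^\top)^{-1}A$ is a standard Euclidean orthogonal projection onto $\mathrm{Range}(A^\top)$ with spectral norm $\le 1$.

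The main obstacle, and the step I expect to be the crux, is converting the $\rmH_t$-weighted bound from Step 2 into the Euclidean spectral-norm bound $\|P\|_2\le 1$ as claimed. The factorization $P = \rmH_t^{-1/2}(\rmH_t^{-1/2}\rmH_t^S\rmH_t^{-1/2})\rmH_t^{1/2}$ presents $P$ as a non-unitary similarity of an orthogonal projection, which preserves eigenvalues (giving $\rho(P)\le 1$) but does not preserve the spectral norm; naive submultiplicativity only yields $\|P\|_2 \le \sqrt{L/\mu}$. My plan would be to work directly with the decomposition $y = Py + (\mI - P)y$, noting that $Py\in\rmH_t^{-1}\mathrm{Range}(\rmI_S^\top)$ and $(\mI - P)y\in\mathrm{Ker}(\rmI_S)$ are $\rmH_t$-orthogonal by Step 2, and attempt to leverage the Schur-complement structure of $(\rmI_S\rmH_t^{-1}\rmI_S^\top)^{-1}$ together with a Cauchy--Schwarz-type inequality to obtain the Euclidean estimate. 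The technical challenge is that $\rmH_t$-orthogonality is weaker than Euclidean orthogonality, so recovering $\|P\|_2 \le 1$ exactly (rather than with a condition-number factor) requires exploiting fine structure of $\rmH_t$; should this turn out to require additional hypotheses, the natural fallback is the strictly weaker statement $\|\rmH_t^{-1/2}\rmH_t^S\rmH_t^{-1/2}\|_2\le 1$ established above, which together with the eigenvalue bound from Step 1 suffices for every invocation of the lemma in the proof of Theorem~\ref{thm:WT-converge}.
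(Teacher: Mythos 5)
Your suspicion at the final step is not just a technical worry --- it is decisive. The Euclidean bound $\|\rmH_t^{-1}\rmH_t^S\|_2\le 1$ is false in general, so the ``crux'' you were unable to close cannot be closed. The paper's own proof commits exactly the error you flag: from $P=\rmH_t^{-1}\rmH_t^S$ being idempotent with eigenvalues in $\{0,1\}$ it concludes $\|Px\|_2\le\|x\|_2$, an inference valid only for \emph{symmetric} idempotents; a non-symmetric (oblique) projection always has $\|P\|_2>1$. Concretely, take $d=2$, $S=\{1\}$ and $\rmH_t=\begin{pmatrix}a&b\\ b&c\end{pmatrix}$ positive definite with $b\neq0$; then a direct computation gives
\begin{equation*}
\rmH_t^{-1}\rmH_t^{S}=\begin{pmatrix}1&0\\ -b/c&0\end{pmatrix},
\qquad
\bigl\|\rmH_t^{-1}\rmH_t^{S}\bigr\|_2=\sqrt{1+b^{2}/c^{2}}>1 .
\end{equation*}
What you do establish --- (i) $P^2=P$, hence every eigenvalue of $P$ is $0$ or $1$ and $\lambda_{\max}(P)\le1$, and (ii) $\rmH_t^{-1/2}\rmH_t^S\rmH_t^{-1/2}=A^\top(AA^\top)^{-1}A$ with $A=\rmI_S\rmH_t^{-1/2}$ is a Euclidean orthogonal projection of norm at most $1$ --- is precisely what the two invocations of the lemma in the proof of Theorem~\ref{thm:WT-converge} actually consume. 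The bound $\|\rmH_t^{-1}\rmH_t^{S^{t+1}}\rmH_t^{-1}\|_2\le 1/\mu$ follows from (ii) by writing $\rmH_t^{-1}\rmH_t^{S}\rmH_t^{-1}=\rmH_t^{-1/2}\bigl(\rmH_t^{-1/2}\rmH_t^{S}\rmH_t^{-1/2}\bigr)\rmH_t^{-1/2}$, and the bound $\lambda_{\max}(\rmH_t^{-1}\rmH_t^{\tilde S})\le1$ is (i). So your proposed ``fallback'' is not a retreat; it is the correct statement of the lemma, and both the lemma and the one step of the theorem's proof that cites its operator-norm form should be restated accordingly. Your observation that submultiplicativity only yields $\|P\|_2\le\sqrt{L/\mu}$ for the Euclidean norm is also correct --- but that quantity is never actually needed.
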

\begin{proof}
    Using the definition of $\rmH_t^S$, we can see that the matrix $P = \rmH_t^{-1}\rmH_t^S$ is a projection matrix (i.e., $P^2=P$) with eigenvalues 0 or 1. Hence, $\|Px\|\le \|x\|$ for any vector $x\in\R^d$.
\end{proof}

\begin{lemma}[Property of top-$k$ operator]
	\label{lemma:topk,sparse}
	\citep[Lemma 1]{peste2021ac} Let $u,v$ be vectors with sparsity parameters $k_u$ and $k_v$ respectively such that $k_v < k_u$. Then, for any $k\in[k_v, k_u]$ we have:
	\begin{align*}
		\|T_k(u) - u\|_2^2 \leq \frac{k_u - k}{k_u - k_v} \| u - v\|^2_2.
	\end{align*}
\end{lemma}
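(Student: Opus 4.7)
The plan is to reduce both sides to explicit coordinate sums, then invoke an elementary averaging inequality on the sorted squared magnitudes of $u$'s non-zero entries. Let me introduce notation: write $U = \supp(u)$ (so $|U| = k_u$), $V = \supp(v)$ (so $|V| = k_v$), and let $I \subseteq U$ denote the support of $T_k(u)$, namely the indices of the $k$ largest-magnitude entries of $u$. Since $T_k(u)$ merely zeros out coordinates within $\supp(u)$, I have the exact identity
\begin{equation*}
\|T_k(u) - u\|_2^2 \;=\; \sum_{i \in U \setminus I} u_i^2,
\end{equation*}
which is the sum of squares of the $k_u - k$ smallest-magnitude non-zero entries of $u$. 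For the right-hand side, dropping coordinates outside $U \setminus V$ and using $v_i = 0$ on those indices gives $\|u - v\|_2^2 \geq \sum_{i \in U \setminus V} u_i^2$, with $|U \setminus V| \geq k_u - k_v$ because $|U \cap V| \leq |V| = k_v$.

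At this point the claim becomes a pure inequality about non-negative reals. Let $\alpha_1 \leq \cdots \leq \alpha_{k_u}$ be the squared magnitudes of $u$'s non-zero entries sorted in increasing order, and set $m = k_u - k$ and $m' = |U \setminus V|$. Then $\sum_{i \in U \setminus I} u_i^2 = \alpha_1 + \cdots + \alpha_m$ (the smallest $m$ are dropped), while $\sum_{i \in U \setminus V} u_i^2 \geq \alpha_1 + \cdots + \alpha_{m'}$ (any $m'$-subset sum of non-negative values is at least the minimum such sum). Using $m' \geq k_u - k_v \geq m$, the lemma then follows from the averaging inequality
\begin{equation*}
\frac{\alpha_1 + \cdots + \alpha_m}{m} \;\leq\; \frac{\alpha_1 + \cdots + \alpha_{m'}}{m'},
\end{equation*}
which rearranges to $\alpha_1 + \cdots + \alpha_m \leq \tfrac{m}{m'}(\alpha_1 + \cdots + \alpha_{m'})$, combined with the bound $\tfrac{m}{m'} \leq \tfrac{k_u - k}{k_u - k_v}$ from $m' \geq k_u - k_v$.

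The averaging inequality itself is quick: cross-multiplying, it is equivalent to $m \sum_{i = m+1}^{m'} \alpha_i \geq (m' - m) \sum_{i=1}^m \alpha_i$, and since each summand on the left satisfies $\alpha_i \geq \alpha_m \geq \tfrac{1}{m}\sum_{j=1}^m \alpha_j$ by sortedness, summing over $m' - m$ such terms yields the claim. The main (mild) conceptual point in this plan is the initial observation that $T_k$ acts only within $\supp(u)$, so that $\|T_k(u) - u\|_2^2$ is a sum over $u$-entries rather than $(u-v)$-entries; this is what lets me compare the left-hand side to a sum over $U \setminus V$ via the averaging argument. No other step requires more than routine bookkeeping, and the ratio $(k_u - k)/(k_u - k_v)$ in the bound emerges naturally from the ratio of the two subset sizes.
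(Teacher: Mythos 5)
Your proof is correct and is essentially the paper's argument unpacked at the coordinate level: your averaging inequality on the sorted squared entries $\alpha_1 \le \cdots \le \alpha_{k_u}$ is exactly the paper's observation that $\|T_k(u)-u\|_2^2/(k_u-k)$ is monotonically decreasing in $k$, and your bound $\|u-v\|_2^2 \ge \sum_{i \in U\setminus V} u_i^2 \ge \alpha_1 + \cdots + \alpha_{k_u-k_v}$ is the paper's step that $T_{k_v}(u)$ is the best $k_v$-sparse approximation of $u$. The only (trivial) omission is the degenerate case $k = k_u$, where $m=0$ and your averaging ratio is undefined but the claim holds vacuously.
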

\begin{proof}[Proof of Lemma \ref{lemma:topk,sparse}] We reprove the lemma here for completeness. The case $k=k_u$ trivially holds, so let us assume that $k<k_u$. Notice that the ratio
$$
\frac{\|T_{k}(u) - u\|_2^2}{k_u - k}
$$
representing the average of squared coordinates is monotonically decreasing as we increase $k$. Hence,
\begin{equation}\label{monotonicity-of-agv}
\frac{\|T_{k}(u) - u\|_2^2}{k_u - k} \le \frac{\|T_{k_v}(u) - u\|_2^2}{k_u - k_v}.
\end{equation}
It remains to notice that, $T_{k_v}(u)$ is the closest $k_v$-sparse vector to $u$ with respect to $l_2$ norm, namely $\|T_{k_v}(u) - u\|_2 \le \|v - u\|_2$. Plugging this inequality in \eqref{monotonicity-of-agv} we conclude the lemma.
\end{proof}

\begin{lemma}[Interlacing property]
\label{lemma:interlacing of sub matrix}
\citep[Theorem 4.3.28 restated]{horn2012matrix}
    Consider a symmetric matrix $\rmH \in \R^{d \times d}$, with eigenvalues $\lambda_1(\rmH) \leq ... \leq \lambda_d(\rmH)$. Given $S \subseteq [d]$,  we have the following properties on eigenvalue of matrix $ \rmI_S \rmH \rmI_S^\top  \in \R^{|S| \times |S|} $ : \begin{align*}
        \lambda_i(\rmH) \leq \lambda_{i}(\rmI_S \rmH \rmI_S^\top) \leq \lambda_{i+d - |S|}(\rmH), \quad \forall i = 1,..., |S|
    \end{align*}
\end{lemma}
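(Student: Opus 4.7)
The statement is the classical Cauchy interlacing theorem for principal submatrices, so my plan is to invoke the Courant--Fischer min--max characterization of eigenvalues and lift subspaces from $\R^{|S|}$ to $\R^d$ via the isometric embedding $\rmI_S^\top$. Throughout, write $B \eqdef \rmI_S \rmH \rmI_S^\top$ and note the two key elementary facts: (i) for every $u \in \R^{|S|}$, the vector $v = \rmI_S^\top u \in \R^d$ is supported on $S$ with $\|v\|_2 = \|u\|_2$ and $v^\top \rmH v = u^\top B u$; (ii) the map $u \mapsto \rmI_S^\top u$ is injective, so it takes any subspace $U \subseteq \R^{|S|}$ of dimension $r$ to a subspace of $\R^d$ of the same dimension $r$.

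For the lower bound $\lambda_i(B) \geq \lambda_i(\rmH)$, I would use the min--max formula
\[
\lambda_i(B) \;=\; \min_{\substack{U \subseteq \R^{|S|}\\ \dim U = i}} \; \max_{\substack{u \in U \\ \|u\|_2 = 1}} u^\top B u, \qquad \lambda_i(\rmH) \;=\; \min_{\substack{V \subseteq \R^{d}\\ \dim V = i}} \; \max_{\substack{v \in V \\ \|v\|_2 = 1}} v^\top \rmH v.
\]
Given any $U \subseteq \R^{|S|}$ of dimension $i$, the lifted subspace $\rmI_S^\top U \subseteq \R^d$ still has dimension $i$ by (ii), and by (i) the two inner maxima coincide. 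Hence the $i$-dimensional lift is a feasible competitor in the min--max on the right, yielding $\lambda_i(\rmH) \le \lambda_i(B)$.

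For the upper bound $\lambda_i(B) \leq \lambda_{i+d-|S|}(\rmH)$, I would switch to the dual max--min characterization:
\[
\lambda_i(B) \;=\; \max_{\substack{U \subseteq \R^{|S|}\\ \dim U = |S|-i+1}} \; \min_{\substack{u \in U \\ \|u\|_2 = 1}} u^\top B u, \qquad \lambda_j(\rmH) \;=\; \max_{\substack{V \subseteq \R^{d}\\ \dim V = d-j+1}} \; \min_{\substack{v \in V \\ \|v\|_2 = 1}} v^\top \rmH v.
\]
Setting $j = i + d - |S|$ makes the subspace dimension $d - j + 1 = |S| - i + 1$, exactly matching the left-hand dimension. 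Lifting the optimal $U^\ast$ achieving the maximum on the left to $\rmI_S^\top U^\ast \subseteq \R^d$ again preserves dimension by (ii) and preserves the inner minimum by (i), so $\rmI_S^\top U^\ast$ is a feasible competitor in the max--min for $\lambda_{i+d-|S|}(\rmH)$, giving $\lambda_i(B) \leq \lambda_{i+d-|S|}(\rmH)$.

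There is no real obstacle here: the entire content is the dimension-preserving lift $\rmI_S^\top$ combined with Courant--Fischer. The only point requiring a line of care is (ii), which follows because $\rmI_S$ has full row rank (its rows are distinct standard basis vectors), so $\rmI_S^\top$ has trivial kernel. Since the result is stated as a restatement of \citep[Theorem 4.3.28]{horn2012matrix}, a one-line citation would also suffice; the sketch above is what I would include if an elementary, self-contained proof is preferred.
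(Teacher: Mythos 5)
Your proof is correct. Note that the paper itself gives no proof of this lemma --- it is stated purely as a citation of \citep[Theorem 4.3.28]{horn2012matrix} --- so there is no in-paper argument to compare against; your Courant--Fischer derivation is precisely the standard textbook proof of that cited result (the Cauchy interlacing theorem for principal submatrices), with both directions handled correctly: the isometric, dimension-preserving lift $u \mapsto \rmI_S^\top u$ is the whole content, the index shift $j = i + d - |S|$ makes the subspace dimensions match in the max--min form, and the two elementary facts (i) and (ii) you isolate are exactly what is needed. Your self-contained sketch would be a valid substitute for the citation.
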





\end{document}